\theoremstyle{plain}
\newtheorem{thm}{Theorem}
\newtheorem{lem}{Lemma}
\newtheorem{prop}{Proposition}
\newtheorem{cor}{Corollary}
\theoremstyle{definition}
\newtheorem{defn}{Definition}
\newtheorem{assumption}{Assumption}
\newtheorem{example}{Example}
\newtheorem{rmk}{Remark}
\providecommand{\customgenericname}{}
\newcommand{\newcustomtheorem}[2]{%
  \newenvironment{#1}[1]
  {%
   \renewcommand\customgenericname{#2}%
   \renewcommand\theinnercustomgeneric{##1}%
   \innercustomgeneric
  }
  {\endinnercustomgeneric}
}
\newcommand*{\KeepStyleUnderBrace}[1]{
  \mathop{%
    \mathchoice
    {\underbrace{\displaystyle#1}}%
    {\underbrace{\textstyle#1}}%
    {\underbrace{\scriptstyle#1}}%
    {\underbrace{\scriptscriptstyle#1}}%
  }\limits
}
\def\ma{\bm{a}}
\def\mb{\bm{b}}
\def\mc{\bm{c}}
\def\mA{\bm{A}}
\def\mI{\bm{I}}
\def\mI{\bm{I}}
\def\mM{\bm{M}}
\def\mA{\bm A}
\def\mI{\bm I}
\def\mM{\bm M}
\def\tZ{\mathcal{Z}}
\def\tC{\mathcal{C}}
\def\tE{\mathcal{E}}
\def\tF{\mathcal{F}}
\def\tH{\mathcal{H}}
\def\tI{\mathcal{I}}
\def\tN{\mathcal{N}}
\def\tO{\mathcal{O}}
\def\tT{\mathcal{T}}
\def\tX{\mathcal{X}}
\def\tY{\mathcal{Y}}
\def\tZ{\mathcal{Z}}
\newcommand{\vnormSize}[2]{#1\lVert#2#1\rVert_2}
\newcommand{\FnormSize}[2]{#1\lVert#2#1\rVert_F}
\DeclareMathOperator*{\argmin}{arg\,min}
\def\sign{\textup{sgn}}
\def\srank{\textup{srank}}
\def\rank{\textup{rank}}
\def\caliP{\mathscr{P}_{\textup{sgn}}}
\def\risk{\textup{Risk}}
\algnewcommand\algorithmicinput{\textbf{Input:}}
\algnewcommand\algorithmicoutput{\textbf{Output:}}
\algnewcommand\INPUT{\item[\algorithmicinput]}
\algnewcommand\OUTPUT{\item[\algorithmicoutput]}
\title{Beyond the Signs: Nonparametric Tensor Completion \\
via Sign Series}
\date{}
\author{%
Chanwoo Lee \\
University of Wisconsin -- Madison\\
\texttt{chanwoo.lee@wisc.edu} \\
\and
Miaoyan Wang \\
University of Wisconsin -- Madison\\
\texttt{miaoyan.wang@wisc.edu} \\
}
\begin{document}

\maketitle

\begin{abstract}
We consider the problem of tensor estimation from noisy observations with possibly missing entries. A nonparametric approach to tensor completion is developed based on a new model which we coin as sign representable tensors. The model represents the signal tensor of interest using a series of structured sign tensors. Unlike earlier methods, the sign series representation effectively addresses both low- and high-rank signals, while encompassing many existing tensor models---including CP models, Tucker models, single index models, several hypergraphon models---as special cases. We show that the sign tensor series is theoretically characterized, and computationally estimable, via classification tasks with carefully-specified weights. Excess risk bounds, estimation error rates, and sample complexities are established. We demonstrate the outperformance of our approach over previous methods on two datasets, one on human brain connectivity networks and the other on topic data mining. 
\end{abstract}

\section{Introduction}\label{sec:intro}

Higher-order tensors have recently received much attention in enormous fields including social networks~\citep{anandkumar2014tensor}, neuroscience~\citep{wang2017bayesian}, and genomics~\citep{hore2016tensor}. Tensor methods provide effective representation of the hidden structure in multiway data. In this paper we consider the signal plus noise model,
\begin{equation}\label{eq:modelintro}
\tY=\Theta+\tE,
\end{equation}
where $\tY\in\mathbb{R}^{d_1\times \cdots \times d_K}$ is an order-$K$ data tensor, $\Theta$ is an unknown signal tensor of interest, and $\tE$ is a noise tensor. Our goal is to accurately estimate $\Theta$ from the incomplete, noisy observation of $\tY$. In particular, we focus on the following two problems:
\begin{itemize}[leftmargin=*]
\item Q1 [Nonparametric tensor estimation]. How to flexibly estimate $\Theta$ under a wide range of structures, including both low-rankness and high-rankness?
\item Q2 [Complexity of tensor completion]. How many observed tensor entries do we need to consistently estimate the signal $\Theta$?
\end{itemize}

\subsection{Inadequacies of  low-rank models} The signal plus noise model~\eqref{eq:model} is popular in tensor literature. Existing methods estimate the signal tensor based on low-rankness of $\Theta$~\citep{jain2014provable,montanari2018spectral}. Common low-rank models include Canonical Polyadic (CP) tensors~\citep{hitchcock1927expression}, Tucker tensors~\citep{de2000multilinear}, and block tensors~\citep{wang2019multiway}. While these methods have shown great success in signal recovery, tensors in applications often violate the low-rankness. Here we provide two examples to illustrate the limitation of classical models.

The first example reveals the sensitivity of tensor rank to order-preserving transformations. Let $\tZ \in \mathbb{R}^{30\times 30\times 30}$ be an order-3 tensor with CP $\text{rank}(\tZ)=3$ (formal definition is deferred to end of this section). Suppose a monotonic transformation $f(z)=(1+\exp(-cz))^{-1}$ is applied to $\tZ$ entrywise, and we let the signal $\Theta$ in model~\eqref{eq:modelintro} be the tensor after transformation. Figure~\ref{fig:example}a plots the numerical rank (see Section~\ref{sec:additional}) of $\Theta$ versus $c$. As we see, the rank increases rapidly with $c$, rending traditional low-rank tensor methods ineffective in the presence of mild order-preserving nonlinearities. In  digital processing~\citep{ghadermarzy2018learning} and genomics analysis~\citep{hore2016tensor}, the tensor of interest often undergoes unknown transformation prior to measurements. The sensitivity to transformation makes the low-rank model less desirable in practice.

\begin{figure}[h]
\centering
\includegraphics[width=.8\textwidth]{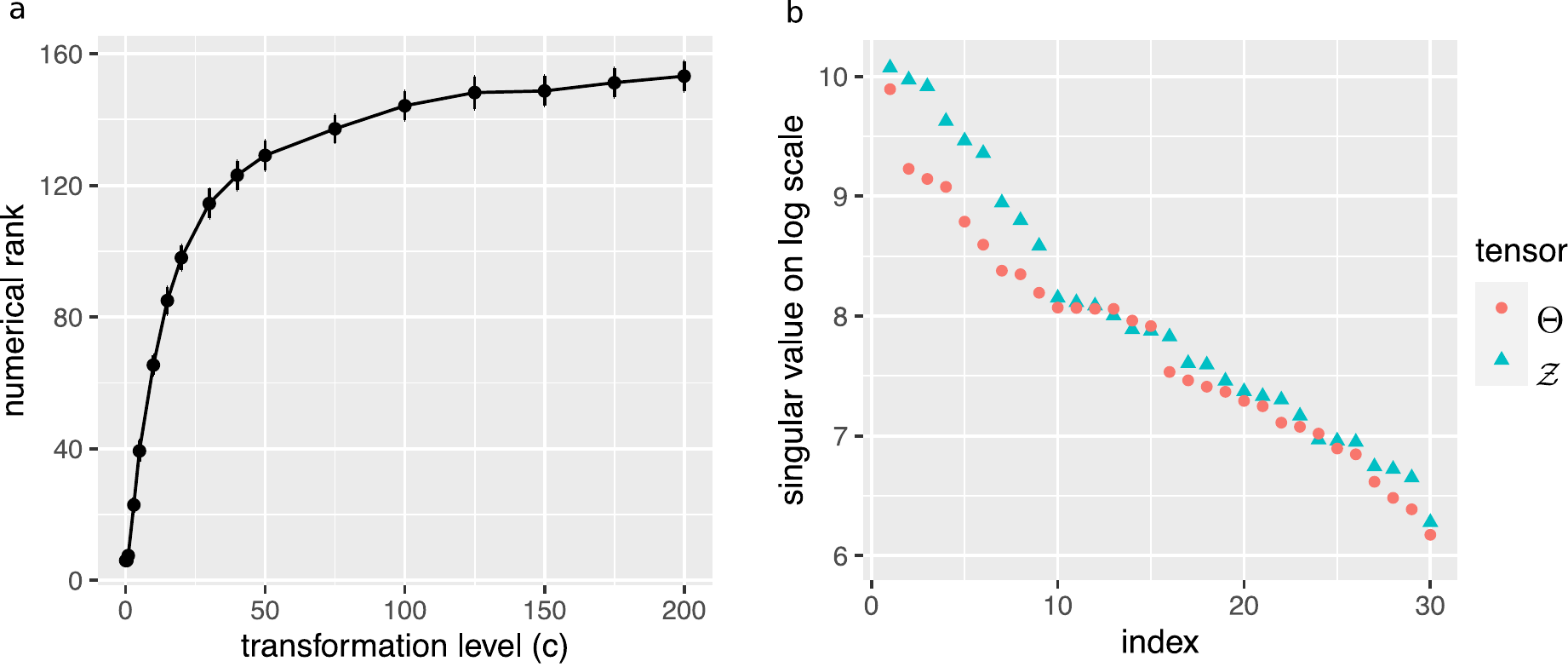}
\caption{(a) Numerical rank of $\Theta$ versus $c$ in the first example. (b) Top $d=30$ tensor singular values in the second example. }\label{fig:example}
\end{figure}

The second example demonstrates the inadequacy of classical low-rankness in representing special structures. Here we consider the signal tensor of the form $\Theta=\log(1+\tZ)$, where $\tZ\in\mathbb{R}^{d\times d\times d}$ is an order-3 tensor with entries $\tZ(i,j,k)={1\over d}\max(i,j,k)$ for $i,j,k\in\{1,\ldots,d\}$. The matrix analogy of $\Theta$ was studied by~\cite{chan2014consistent} in graphon analysis. In this case neither $\Theta$ nor $\tZ$ is low-rank; in fact, the rank is no smaller than the dimension $d$ as illustrated in Figure~\ref{fig:example}b. Again, classical low-rank models fail to address this type of tensor structure. 

In the above and many other examples, the signal tensors $\Theta$ of interest have high rank. Classical low-rank models will miss these important structures. New methods that allow flexible tensor modeling have yet to be developed.

\subsection{Our contributions}
We develop a new model called sign representable tensors to address the aforementioned challenges. 
Figure~\ref{fig:demo} illustrates our main idea. Our approach is built on the sign series representation of the signal tensor, and we propose to estimate the sign tensors through a series of weighted classifications. In contrast to existing methods, our method is guaranteed to recover a wide range of low- and high-rank signals. We highlight two main contributions that set our work apart from earlier literature. 

\begin{figure}[h!]
\centerline{\includegraphics[width=1\textwidth]{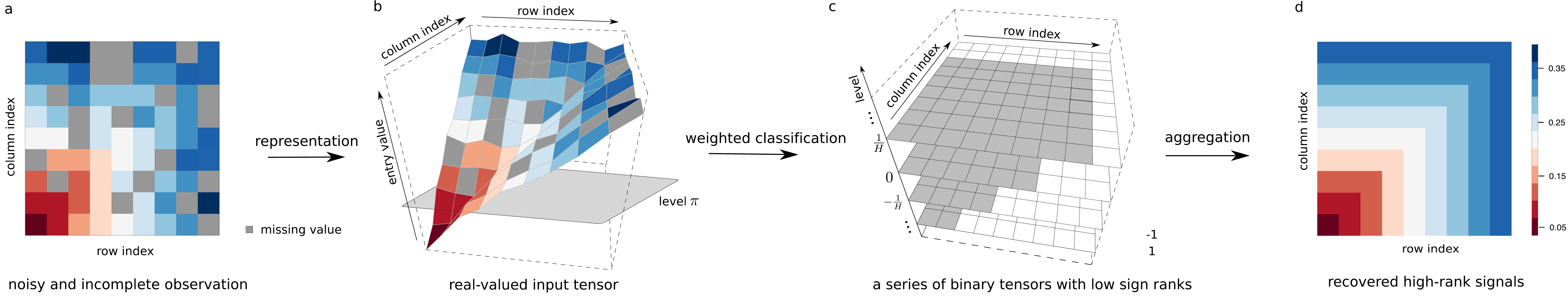}}
\caption{Illustration of our method. For visualization purpose, we plot an order-2 tensor (a.k.a.\ matrix); similar procedure applies to higher-order tensors. (a): a noisy and incomplete tensor input. (b) and (c): main steps of estimating sign tensor series $\sign(\Theta-\pi)$ for $\pi\in  \{-1,\ldots,-{1\over H},0,{1\over H},\ldots,1\}$. (d) estimated signal $\hat \Theta$. The depicted signal is a full-rank matrix based on Example~\ref{eq:example} in Section~\ref{sec:representation}.}\label{fig:demo}
\end{figure}

Statistically, the problem of high-rank tensor estimation is challenging. Existing estimation theory~\citep{anandkumar2014tensor,montanari2018spectral,cai2019nonconvex} exclusively focuses on the regime of fixed $r$ growing $d$. However, such premise fails in high-rank tensors, where the rank may grow with, or even exceed, the dimension. A proper notion of nonparametric complexity is crucial. We show that, somewhat surprisingly, the sign tensor series not only preserves all information in the original signals, but also brings the benefits of flexibility and accuracy over classical low-rank models. The results fill the gap between parametric (low-rank) and nonparametric (high-rank) tensors, thereby greatly enriching the tensor model literature. 

From computational perspective, optimizations regarding tensors are in general NP-hard. Fortunately, tensors sought in applications are specially-structured, for which a number of efficient algorithms are available~\citep{ghadermarzy2018learning,wang2018learning, han2020optimal}. Our high-rank tensor estimate is provably reductable to a series of classifications, and its divide-and-conquer nature facilitates efficient computation. The ability to import and adapt existing tensor algorithms is one advantage of our method. 

We also highlight the challenges associated with tensors compared to matrices. High-rank matrix estimation is recently studied under nonlinear models~\citep{ganti2015matrix} and subspace clustering~\citep{pmlr-v70-ongie17a,fan2019online}. However, the problem for high-rank tensors is more challenging, because the tensor rank often exceeds the dimension when order $K\geq 3$~\citep{anandkumar2017analyzing}. This is in sharp contrast to matrices. We show that, applying matrix methods to higher-order tensors results in suboptimal estimates. A full exploitation of the higher-order structure is needed; this is another challenge we address in this paper.

\subsection{Notation}
We use $\sign(\cdot)\colon \mathbb{R}\to\{-1,1\}$ to denote the sign function, where $\sign(y)=1$ if $y\geq 0$ and $-1$ otherwise. We allow univariate functions, such as $\sign(\cdot)$ and general $f\colon \mathbb{R}\to\mathbb{R}$, to be applied to tensors in an element-wise manner. 
We denote $a_n\lesssim b_n$ if $\lim_{n\to \infty} a_n/b_n\leq c$ for some constant $c\geq0$.  We use the shorthand $[n]$ to denote the $n$-set $\{1,\ldots,n\}$ for $n\in\mathbb{N}_{+}$. Let $\Theta\in\mathbb{R}^{d_1\times \cdots \times d_K}$ denote an order-$K$ $(d_1,\ldots,d_K)$-dimensional tensor, and $\Theta(\omega)\in\mathbb{R}$ denote the tensor entry indexed by $\omega \in[d_1]\times \cdots \times [d_K]$. An event $E$ is said to occur ``with very high probability'' if $\mathbb{P}(E)$ tends to 1 faster than any polynomial of tensor dimension $d:=\min_k d_k \to\infty$. The CP decomposition~\citep{hitchcock1927expression} is defined by
\begin{equation}\label{eq:CP}
\Theta=\sum_{s=1}^r\lambda_s \ma^{(1)}_s\otimes\cdots\otimes \ma^{(K)}_s,
\end{equation}
where $\lambda_1\geq \cdots \geq \lambda_r>0$ are tensor singular values, $\ma^{(k)}_s\in\mathbb{R}^{d_k}$ are norm-1 tensor singular vectors, and $\otimes$ denotes the outer product of vectors. The minimal $r\in\mathbb{N}_{+}$ for which~\eqref{eq:CP} holds is called the tensor rank, denoted $\rank(\Theta)$.

\section{Model and proposal overview}\label{sec:overview}
Let $\tY$ be an order-$K$ $(d_1,\ldots,d_K)$-dimensional data tensor generated from the following model
\begin{equation}\label{eq:model}
\tY=\Theta+\tE,
\end{equation}
where $\Theta\in\mathbb{R}^{d_1\times \cdots \times d_K}$ is an unknown signal tensor of interest, and $\tE$ is a noise tensor consisting of mean-zero, independent but not necessarily identically distributed entries. We allow heterogenous noise, in that the marginal distribution of noise entry $\tE(\omega)$ may depend on $\omega$. Assume that $\tY(\omega)$ takes value in a bounded interval $[-A, A]$; without loss of generality, we set $A=1$ throughout the paper.  

Our observation is an incomplete data tensor from~\eqref{eq:model}, denoted  $\tY_\Omega$, where $\Omega\subset[d_1]\times\cdots\times[d_K]$ is the index set of observed entries. We consider a general model on $\Omega$ that allows both uniform and non-uniform samplings. Specifically, let $\Pi=\{p_\omega\}$ be an arbitrarily predefined probability distribution over the full index set with $\sum_{\omega\in[d_1]\times \cdots \times [d_K]}p_\omega=1$. Assume that the entries $\omega$ in $\Omega$ are i.i.d.\ draws with replacement from the full index set using distribution $\Pi$. The sampling rule is denoted as $\omega\sim \Pi$. 

Before describing our main results, we provide the intuition behind our method. In the two examples in Section~\ref{sec:intro}, the high-rankness in the signal $\Theta$ makes the estimation challenging. Now let us examine the sign of the $\pi$-shifted signal $\sign(\Theta-\pi)$ for any given $\pi\in[-1,1]$. It turns out that, these sign tensors share the same sign patterns as low-rank tensors. Indeed, the signal tensor in the first example has the same sign pattern as a rank-$4$ tensor, since $\sign(\Theta-\pi)=\sign(\tZ-f^{-1}(\pi))$. The signal tensor in the second example has the same sign pattern as a rank-2 tensor, since $\sign(\Theta-\pi)=\sign(\max(i,j,k)-d(e^{\pi}-1))$ (see Example~\ref{eq:example} in Section~\ref{sec:representation}).

The above observation suggests a general framework to estimate both low- and high-rank signal tensors. Figure~\ref{fig:demo} illustrates the main crux of our method. We dichotomize the data tensor into a series of sign tensors $\sign (\tY_\Omega-\pi)$ for $\pi\in \tH={\{\small-1,\ldots,  -{1\over H},0, {1\over H},\ldots,1\}}$. Then, we estimate the sign signals $\sign(\Theta-\pi)$ by performing classification
\[
\hat \tZ_\pi=\argmin_{\text{low rank tensor $\tZ$}} \text{Weighted-Loss}(\sign(\tZ), \sign (\tY_\Omega-\pi)),
\]
where Weighted-Loss$(\cdot,\cdot)$ denotes a carefully-designed classification objective function which will be described in later sections. Our final proposed tensor estimate takes the form
\[
\hat \Theta = {1\over 2H+1}\sum_{\pi \in \tH} \sign(\hat \tZ_\pi).
\]
Our approach is built on the nonparametric sign representation of signal tensors. The estimate $\hat \Theta$ is essentially learned from dichotomized tensor series $\{\sign(\tY_\Omega-\pi)\colon \pi \in \tH\}$ with proper weights. We show that a careful aggregation of dichotomized data not only preserves all information in the original signals, but also brings benefits of accuracy and flexibility over classical low-rank models. Unlike traditional methods, the sign representation is guaranteed to recover both low- and high-rank signals that were previously impossible. The method enjoys statistical effectiveness and computational efficiency. 
\section{Statistical properties of sign representable tensors}\label{sec:representation}
This section develops sign representable tensor models for $\Theta$ in~\eqref{eq:model}. We characterize the algebraic and statistical properties of sign tensor series, which serves the theoretical foundation for our method.

\subsection{Sign-rank and sign tensor series}\label{sec:sign-rank}
Let $\Theta$ be the tensor of interest, and $\sign (\Theta)$ the corresponding sign pattern. The sign patterns induce an equivalence relationship between tensors. Two tensors are called sign equivalent, denoted $\simeq$, if they have the same sign pattern.\\

\begin{defn}[Sign-rank]
The sign-rank of a tensor $\Theta\in\mathbb{R}^{d_1\times \cdots \times d_K}$ is defined by the minimal rank among all tensors that share the same sign pattern as $\Theta$; i.e.,
\[
\srank(\Theta) = \min \{\rank(\Theta')\colon  \Theta'\simeq \Theta,\ \Theta'\in\mathbb{R}^{d_1\times \cdots \times d_K}\}.
\]
\end{defn}
The sign-rank is also called \emph{support rank}~\citep{cohn2013fast}, \emph{minimal rank}~\citep{alon2016sign}, and \emph{nondeterministic rank}~\citep{de2003nondeterministic}. Earlier work defines sign-rank for binary-valued tensors; we extend the notion to continuous-valued tensors. Note that the sign-rank concerns only the sign pattern but discards the magnitude information of $\Theta$. In particular, $\srank(\Theta)=\srank(\sign \Theta)$. 

Like most tensor problems~\citep{hillar2013most}, determining the sign-rank for a general tensor is NP hard~\citep{alon2016sign}. Fortunately, tensors arisen in applications often possess special structures that facilitate analysis. By definition, the sign-rank is upper bounded by the tensor rank. More generally, we have the following upper bounds. \\

\begin{prop}[Upper bounds of the sign-rank]~\label{cor:monotonic} For any strictly monotonic function $g\colon \mathbb{R}\to \mathbb{R}$ with $g(0)=0$,
\[
\textup{srank}(\Theta)\leq\rank(g(\Theta)).
\]
\end{prop}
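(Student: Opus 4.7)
The plan is to observe that the hypothesis on $g$ forces $g(\Theta)$ to have essentially the same sign pattern as $\Theta$, so $g(\Theta)$ itself (up to an overall sign) is an admissible competitor in the minimization defining $\srank(\Theta)$.

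More precisely, I would first establish the entrywise identity $\sign(g(x)) = \varepsilon\, \sign(x)$ for all $x\in\mathbb{R}$, where $\varepsilon = +1$ if $g$ is strictly increasing and $\varepsilon = -1$ if $g$ is strictly decreasing. This is a short case check: since $g(0)=0$ and $g$ is strictly monotonic, the three cases $x>0$, $x=0$, $x<0$ map respectively to $g(x)>0$, $g(x)=0$, $g(x)<0$ (increasing case) or to $g(x)<0$, $g(x)=0$, $g(x)>0$ (decreasing case).

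Applying this entrywise to $\Theta$ gives $\sign(g(\Theta)) = \varepsilon\,\sign(\Theta)$, hence the tensor $\varepsilon\, g(\Theta)$ is sign equivalent to $\Theta$ in the sense of $\simeq$. Since $\rank(\varepsilon\, g(\Theta)) = \rank(g(\Theta))$ (multiplying a tensor by $\pm 1$ does not change its CP rank), the tensor $\varepsilon\, g(\Theta)$ is an element of the set $\{\Theta'\colon \Theta'\simeq \Theta\}$ whose minimum rank is $\srank(\Theta)$. Therefore
\[
\srank(\Theta)\ \leq\ \rank(\varepsilon\, g(\Theta))\ =\ \rank(g(\Theta)),
\]
which is the desired inequality.

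There is no real obstacle: the bound is essentially definitional once one notices that strict monotonicity plus $g(0)=0$ preserves (or uniformly flips) the sign pattern. The only point that warrants explicit mention is the decreasing case, where one needs the elementary fact that sign-rank is invariant under global negation so that $\rank(g(\Theta))$ is still a valid upper bound on $\srank(\Theta)$.
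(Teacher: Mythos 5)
Your proof is correct and follows essentially the same route as the paper's: show that strict monotonicity together with $g(0)=0$ makes $g(\Theta)$ (increasing case) or $-g(\Theta)$ (decreasing case) sign equivalent to $\Theta$, then use invariance of rank under negation to conclude $\srank(\Theta)\leq \rank(g(\Theta))$. One minor remark: under the paper's convention $\sign(0)=+1$, your intermediate identity $\sign(g(x))=-\sign(x)$ fails at $x=0$ in the decreasing case, but the statement you actually need, $\sign(-g(\Theta))=\sign(\Theta)$ entrywise, still holds because $-g(0)=0$, so the argument goes through unchanged.
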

Conversely, the sign-rank can be much smaller than the tensor rank, as we have shown in the examples of Section~\ref{sec:intro}.\\

\begin{prop}[Broadness]\label{prop:extention}\label{cor:broadness}For every order $K\geq 2$ and dimension $d$, there exist tensors $\Theta\in\mathbb{R}^{d\times \cdots \times d}$ such that $\rank(\Theta)\geq d$ but $\srank(\Theta-\pi)\leq 2$ for all $\pi\in\mathbb{R}$.  
\end{prop}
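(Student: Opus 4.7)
The plan is to exhibit an explicit family of ``max tensors'' whose rank is bounded below directly, yet whose $\pi$-shifts each admit an explicit rank-$2$ representation of the correct sign pattern. Concretely, for $(i_1,\dots,i_K) \in [d]^K$ I take
\[
\Theta(i_1,\dots,i_K) \;=\; \max(i_1,\dots,i_K),
\]
which is essentially the tensor featured in the second motivating example of Section~\ref{sec:intro}. I verify the two required properties in turn.

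For the sign-rank bound, the key observation is the logical equivalence $\max(i_1,\dots,i_K) \geq \pi \iff \lnot(\text{all } i_k < \pi)$. Let $u_\pi \in \mathbb{R}^d$ be the indicator vector with entries $u_\pi(i) = \mathds{1}\{i < \pi\}$, let $J$ denote the all-ones order-$K$ tensor, and set
\[
T_\pi \;=\; J \;-\; 2\, u_\pi \otimes u_\pi \otimes \cdots \otimes u_\pi \qquad (K\text{ factors}).
\]
By construction $T_\pi(i_1,\dots,i_K) = -1$ exactly when every $i_k < \pi$ (i.e.\ $\max < \pi$) and $T_\pi(i_1,\dots,i_K) = +1$ otherwise, so $\sign(T_\pi) = \sign(\Theta - \pi)$ pointwise under the convention $\sign(0)=1$. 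Since $T_\pi$ is a sum of at most two rank-$1$ tensors, $\srank(\Theta - \pi) \leq \rank(T_\pi) \leq 2$ for every $\pi \in \mathbb{R}$; the degenerate ranges $\pi \leq 1$ and $\pi > d$ collapse to $T_\pi = \pm J$ and are handled by the same identity.

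For the rank lower bound I invoke the standard fact, immediate from the CP decomposition~\eqref{eq:CP}, that a $2$-dimensional slice of a CP-rank-$r$ tensor is a matrix of rank at most $r$. Fixing $i_3 = \cdots = i_K = 1$ reduces $\Theta$ to the $d \times d$ slice $M(i,j) = \max(i,j)$. Consecutive row differences satisfy $R_i - R_{i-1} = (1,\dots,1,0,\dots,0)$ with exactly $i-1$ leading ones, so $R_1$ together with the $d-1$ differences form a ``staircase'' of supports that is linearly independent in $\mathbb{R}^d$; hence $\rank(M) = d$ and therefore $\rank(\Theta) \geq d$.

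The only place requiring caution is the sign boundary $\max = \pi$ and the degenerate ranges of $\pi$, but both are absorbed by the explicit form of $T_\pi$; no step is genuinely delicate, and the whole argument reduces to one algebraic identity plus a short slice-rank computation.
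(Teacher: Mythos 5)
Your proof is correct and follows essentially the same route as the paper's: the paper's constructive example (Section~\ref{sec:high-rank}, Example~\ref{example:max}) is the max hypergraphon $\Theta=\log(1+\frac{1}{d}\max(i_1,\ldots,i_K))$, whose rank lower bound is likewise obtained from a full-rank matrix slice via row differences and whose sign pattern is likewise written as $1-2\,\ma^{\otimes K}$ for an indicator vector $\ma$. Dropping the logarithm, as you do, only simplifies the row-operation computation and changes nothing essential.
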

We provide several examples in Section~\ref{sec:high-rank}, in which the tensor rank grows with dimension $d$ but the sign-rank remains a constant. The results highlight the advantages of using sign-rank in the high-dimensional tensor analysis. Propositions~\ref{cor:monotonic} and~\ref{prop:extention} together demonstrate the strict broadness of low sign-rank family over the usual low-rank family. 

We now introduce a tensor family, which we coin as ``sign representable tensors'', for the signal model in \eqref{eq:model}.\\
\begin{defn}[Sign representable tensors] 
Fix a level $\pi\in[-1,1]$. A tensor $\Theta$ is called $(r,\pi)$-sign representable, if the tensor $(\Theta-\pi)$ has sign-rank bounded by $r$. A tensor $\Theta$ is called $r$-sign (globally) representable, if $\Theta$ is $(r,\pi)$-sign representable for all $\pi\in[-1,1]$. The collection $\{\sign(\Theta-\pi)\colon \pi \in[-1,1]\}$ is called the sign tensor series. 
We use $\caliP(r)=\{\Theta\colon \srank(\Theta-\pi)\leq r \text{ for all }\pi\in[-1,1]\}$ to denote the $r$-sign representable tensor family.
\end{defn}

We show that the $r$-sign representable tensor family is a general model that incorporates most existing tensor models, including low-rank tensors, single index models, GLM models, and several hypergraphon models. \\

\begin{example}[CP/Tucker low-rank models] The CP and Tucker low-rank tensors are the two most popular tensor models~\citep{kolda2009tensor}. Let $\Theta$ be a low-rank tensor with CP rank $r$. We see that $\Theta$ belongs to the sign representable family; i.e., $\Theta\in\caliP(r+1)$ (the constant $1$ is due to $\rank(\Theta-\pi)\leq r+1$). Similar results hold for Tucker low-rank tensors $\Theta\in\caliP(r+1)$, where $r=\prod_kr_k$ with $r_k$ being the $k$-th mode Tucker rank of $\Theta$.  \\
\end{example} 

\begin{example}[Tensor block models (TBMs)] Tensor block model~\citep{wang2019multiway,chi2020provable} assumes a checkerbord structure among tensor entries under marginal index permutation. The signal tensor $\Theta$ takes at most $r$ distinct values, where $r$ is the total number of multiway blocks. Our model incorporates TBM because $\Theta \in \caliP(r)$. \\
\end{example}

\begin{example}[Generalized linear models (GLMs)] Let $\tY$ be a binary tensor from a logistic model~\citep{wang2018learning} with mean $\Theta=\text{logit}(\tZ)$, where $\tZ$ is a latent low-rank tensor. Notice that $\Theta$ itself may be high-rank (see Section~\ref{sec:intro}). By definition, $\Theta$ is a low-rank sign representable tensor. Same conclusion holds for general exponential-family models with a (known) link function~\citep{hong2020generalized}. \\
\end{example}

\begin{example}[Single index models (SIMs)] Single index model is a flexible semiparametric model proposed in economics~\citep{robinson1988root} and high-dimensional statistics~\citep{balabdaoui2019least,ganti2017learning}. We here extend the model to higher-order tensors $\Theta$. The SIM assumes the existence of a (unknown) monotonic function $g\colon \mathbb{R}\to \mathbb{R}$ such that $g(\Theta)$ has rank $r$. We see that $\Theta$ belongs to the sign representable family; i.e., $\Theta\in \caliP(r+1)$. \\
\end{example}

\begin{example}[Min/Max hypergraphon]\label{eq:example}Graphon is a popular nonparametric model for networks~\citep{chan2014consistent,xu2018rates}. Here we revisit the model introduced in Section~\ref{sec:intro} for generality. Let $\Theta$ be an order-$K$ tensor generated from the hypergraphon $\Theta(i_1,\ldots,i_K)=\log(1+\max_kx^{(k)}_{i_k})$, where $x^{(k)}_{i_k}$ are given number in $[0,1]$ for all $i_k\in[d_k], k\in[K]$. We conclude that $\Theta \in \caliP(2)$, because the sign tensor $\sign(\Theta-\pi)$ with an arbitrary $\pi\in(0,\ \log 2)$ is a block tensor with at most two blocks (see Figure~\ref{fig:demo}c).

The results extend to general min/max hypergraphons. Let $g(\cdot)$ be a continuous univariate function with at most $r\geq 1$ distinct real roots in the equation $g(z)=\pi$; this property holds, e.g., when $g(z)$ is a polynomial of degree $r$. Then, the tensor $\Theta$ generated from $\Theta(i_1,\ldots,i_K)=g(\max_kx^{(k)}_{i_k})$ belongs to $\caliP(2r)$ (see Section~\ref{sec:high-rank}). Same conclusion holds if the maximum in $g(\cdot)$ is replaced by the minimum. 
\end{example}

\subsection{Statistical characterization of sign tensors via weighted classification}\label{sec:identifiability}

Accurate estimation of a sign representable tensor depends on the behavior of sign tensor series, $\sign(\Theta-\pi)$. In this section, we show that sign tensors are completely characterized by weighted classification. The results bridge the algebraic and statistical properties of sign representable tensors.
 
For a given $\pi \in [-1,1]$, define a $\pi$-shifted data tensor $\bar \tY_\Omega$ with entries $\bar \tY(\omega) = (\tY(\omega)-\pi)$ for $\omega\in \Omega$. We propose a weighted classification objective function
\begin{equation}\label{eq:sample}
L(\tZ, \bar \tY_\Omega)= {1\over |\Omega|}\sum_{\omega \in \Omega}\ \KeepStyleUnderBrace{|\bar \tY(\omega)|}_{\text{weight}}\  \times \ \KeepStyleUnderBrace{| \sign \tZ(\omega)-\sign \bar \tY(\omega)|}_{\text{classification loss}},
\end{equation}
where $\tZ\in\mathbb{R}^{d_1\times \cdots \times d_K}$ is the decision variable to be optimized, $|\bar \tY(\omega)|$ is the entry-specific weight equal to the distance from the tensor entry to the target level $\pi$. The entry-specific weights incorporate the magnitude information into classification, where entries far away from the target level are penalized more heavily in the objective. In the special case of binary tensor $\tY\in\{-1,1\}^{d_1\times\cdots\times d_K}$ and target level $\pi=0$, the loss~\eqref{eq:sample} reduces to usual classification loss. 

Our proposed weighted classification function~\eqref{eq:sample} is important for characterizing $\sign(\Theta-\pi)$. Define the weighted classification risk 
\begin{equation}\label{eq:population}
\textup{Risk}(\tZ)=\mathbb{E}_{\tY_\Omega}L(\tZ,\bar\tY_\Omega),
\end{equation}
where the expectation is taken with respect to $\tY_\Omega$ under model~\eqref{eq:model} and the sampling distribution $\omega\sim\Pi$. Note that the form of $\textup{Risk}(\cdot)$ implicitly depends on $\pi$; we suppress $\pi$ when no confusion arises. 
\begin{prop}[Global optimum of weighted risk]\label{prop:global}
Suppose the data $\tY_\Omega$ is generated from model~\eqref{eq:model} with $\Theta \in \caliP(r)$. Then, for all $\bar \Theta$ that are sign equivalent to $\sign(\Theta-\pi)$, 
\begin{align}\label{eq:optimal}
\textup{Risk}(\bar \Theta )&=\inf\{\textup{Risk}(\tZ)\colon \tZ\in\mathbb{R}^{d_1\times \cdots \times d_K}\},\notag \\
&=\inf\{\textup{Risk}(\tZ)\colon \textup{rank} (\tZ)\leq r\}.
\end{align}
\end{prop}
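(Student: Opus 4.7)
The plan is to reduce the risk to a separable, entry-wise optimization, identify the pointwise Bayes-optimal sign, and then lift this to the tensor-level statement using the defining property of $\caliP(r)$.

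First I would unfold the risk entrywise. Since the index $\omega$ is drawn from $\Pi$ and, conditionally on $\omega$, the noise $\tE(\omega)$ is mean-zero and independent across entries, the risk decomposes as
\begin{equation}
\textup{Risk}(\tZ)=\sum_{\omega} p_\omega\, \mathbb{E}_{\tE(\omega)}\bigl[\,|\bar\tY(\omega)|\cdot|\sign\tZ(\omega)-\sign\bar\tY(\omega)|\,\bigr].
\end{equation}
Crucially, $\tZ$ enters this expression only through the binary variables $s_\omega:=\sign\tZ(\omega)\in\{-1,+1\}$, so minimizing $\textup{Risk}$ over $\tZ\in\mathbb{R}^{d_1\times\cdots\times d_K}$ is equivalent to choosing each $s_\omega$ independently. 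This separability is what makes the minimization tractable and is the crux of the argument.

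Next I would solve the pointwise problem. Writing $|\bar\tY|\cdot|s-\sign\bar\tY|=2|\bar\tY|\mathbf{1}\{s\neq\sign\bar\tY\}$ and splitting on the two choices $s\in\{-1,+1\}$, a short calculation shows that the conditional contribution at entry $\omega$ equals $2\mathbb{E}[(-\bar\tY(\omega))_+]$ when $s=+1$ and $2\mathbb{E}[(\bar\tY(\omega))_+]$ when $s=-1$. Their difference is $2\mathbb{E}[\bar\tY(\omega)]=2(\Theta(\omega)-\pi)$, using only the mean-zero assumption on $\tE$. Hence the entrywise minimum is attained at $s^\star_\omega=\sign(\Theta(\omega)-\pi)$, with ties (when $\Theta(\omega)=\pi$) broken consistently by the convention $\sign(0)=1$. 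Summing over $\omega$ shows that any tensor $\bar\Theta$ whose sign pattern matches $\sign(\Theta-\pi)$ attains $\inf_{\tZ\in\mathbb{R}^{d_1\times\cdots\times d_K}}\textup{Risk}(\tZ)$, which is exactly the first equality in \eqref{eq:optimal}.

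Finally I would pass to the rank-constrained infimum. The inclusion $\inf\{\textup{Risk}(\tZ):\rank(\tZ)\leq r\}\geq\inf_{\tZ}\textup{Risk}(\tZ)$ is trivial. For the reverse direction, the assumption $\Theta\in\caliP(r)$ together with the definition of sign-rank guarantees the existence of a tensor $\tZ^\star$ with $\rank(\tZ^\star)\leq r$ and $\sign\tZ^\star=\sign(\Theta-\pi)$. By the previous paragraph, $\textup{Risk}(\tZ^\star)$ already equals the unconstrained infimum, yielding the second equality. The main (mild) obstacle is book-keeping: justifying the interchange of sum and expectation in the entry-wise decomposition under heterogeneous noise, and checking that the sign convention at the boundary $\Theta(\omega)=\pi$ does not perturb the infimum; both are handled by the fact that $|\bar\tY(\omega)|$ vanishes on the tie set in an appropriate sense, so the pointwise optimizer is insensitive to the choice on that set.
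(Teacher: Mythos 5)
Your proposal is correct and takes essentially the same route as the paper's proof: both reduce the risk to an entrywise comparison, use the mean-zero noise so that the difference between the two sign choices at entry $\omega$ is proportional to $\Theta(\omega)-\pi$ (the paper phrases this as nonnegativity of the excess risk $\textup{Risk}(\tZ)-\textup{Risk}(\bar\Theta)=\mathbb{E}_{\omega\sim\Pi}|\sign(\Theta(\omega)-\pi)-\sign\tZ(\omega)||\Theta(\omega)-\pi|$), and then invoke $\srank(\Theta-\pi)\leq r$ to equate the rank-constrained and unconstrained infima. One small quibble: your closing remark that ``$|\bar\tY(\omega)|$ vanishes on the tie set'' is not quite right (on that set $\bar\tY(\omega)=\tE(\omega)\neq 0$ in general); the correct and sufficient reason, which you already derived, is that the two conditional costs are equal there because their difference $2(\Theta(\omega)-\pi)$ is zero.
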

The results show that the sign tensor $\sign(\Theta-\pi)$ optimizes the weighted classification risk. This fact suggests a practical procedure to estimate $\sign(\Theta-\pi)$ via empirical risk optimization of $L(\tZ,\bar \tY_\Omega)$. In order to establish the recovery guarantee, we shall address the uniqueness (up to sign equivalence) for the optimizer of $\risk(\cdot)$. The local behavior of $\Theta$ around $\pi$ turns out to play a key role in the accuracy. 

Some additional notation is needed. We use $\tN=\{\pi\colon$  $\mathbb{P}_{\omega\sim \Pi}(\Theta(\omega)=\pi)\neq 0\}$ to denote the set of mass points of $\Theta$ under $\Pi$. Assume there exists a constant $C>0$, independent of tensor dimension, such that $|\tN|\leq C$. Note that both $\Pi$ and $\Theta$ implicitly depend on the tensor dimension. Our assumptions are imposed to $\Pi=\Pi(d)$ and $\Theta=\Theta(d)$ in the high-dimensional regime uniformly where $d:=\min_kd_k\to\infty$. 

\begin{assumption}[$\alpha$-smoothness]\label{ass:margin} 
Fix $\pi\notin \tN$. Assume there exist constants $\alpha=\alpha(\pi)> 0, c=c(\pi) >0$, independent of tensor dimension, such that, 
\begin{equation}\label{eq:smooth}
\sup_{0\leq t<\rho(\pi, \tN)}{\mathbb{P}_{\omega \sim \Pi}[|\Theta (\omega)-\pi|\leq t ]\over t^\alpha} \leq c,
\end{equation}
where $\rho(\pi,\tN):=\min_{\pi'\in \tN}|\pi-\pi'|$ denotes the distance from $\pi$ to the nearest point in $\tN$. The largest possible $\alpha=\alpha(\pi)$ in~\eqref{eq:smooth} is called the smoothness index at level $\pi$. We make the convention that $\alpha= \infty$ if the set $\{\omega\colon |\Theta(\omega)-\pi|\leq t\}$ has zero measure, implying almost no entries of which $\Theta(\omega)$ is around the level $\pi$. We call a tensor $\Theta$ is $\alpha$-globally smooth, if~\eqref{eq:smooth} holds with a global constant $c>0$ for all $\pi\in[-1,1]$ except for a finite number of levels. 
\end{assumption}

The smoothness index $\alpha$ quantifies the intrinsic hardness of recovering $\sign(\Theta-\pi)$ from $\risk(\cdot)$. 
The value of $\alpha$ depends on both the sampling distribution $\omega\sim \Pi$ and the behavior of $\Theta(\omega)$. 
The recovery is easier at levels where points are less concentrated around $\pi$ with a large value of $\alpha>1$, or equivalently, when the cumulative distribution function (CDF) $G(\pi):=\mathbb{P}_{\omega\sim \Pi}[\Theta(\omega)\leq \pi]$ remains flat around $\pi$. A small value of $\alpha<1$ indicates the nonexistent (infinite) density at level $\pi$, or equivalently, when the $G(\pi)$ jumps at $\pi$.  A typical case is $\alpha=1$ when the $G(\pi)$ has finite non-zero derivative in the vicinity of $\pi$. Table~\ref{tab:simulation} illustrates the $G(\pi)$ for various models of $\Theta$ (see Section~\ref{sec:simulation} Simulation for details). 

We now reach the main theorem in this section. For two tensors $\Theta_1,\Theta_2$, define the mean absolute error (MAE)
\[
\text{MAE}(\Theta_1, \Theta_2)\stackrel{\text{def}}{=}\mathbb{E}_{\omega\sim \Pi}|\Theta_1(\omega)-\Theta_2(\omega)|.
\]
\begin{thm}[Identifiability]\label{thm:population}Under Assumption~\ref{ass:margin}, for all tensors $\bar \Theta \simeq \sign(\Theta-\pi)$ and tensors $\tZ\in\mathbb{R}^{d_1\times \cdots \times d_K}$,
\[
\textup{MAE}(\sign \tZ, \sign \bar \Theta) \leq C(\pi)\left[\textup{Risk}(\tZ)-\textup{Risk}( \bar \Theta)\right]^{\alpha/(\alpha+1)},
\]
where $C(\pi)>0$ is independent of $\tZ$. 
\end{thm}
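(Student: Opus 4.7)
The plan is to follow the classical low-noise (Tsybakov-margin) route, translated to the weighted classification loss~\eqref{eq:sample}: I would first identify the Bayes-optimal classifier and derive an explicit formula for the pointwise excess risk, then couple the resulting weighted disagreement with the unweighted disagreement through Assumption~\ref{ass:margin}. Because $\risk(\cdot)$ and $\textup{MAE}(\sign\tZ,\sign\bar\Theta)$ both depend on $\bar\Theta$ only through $\sign\bar\Theta=\sign(\Theta-\pi)$, it suffices to prove the statement for $\bar\Theta=\sign(\Theta-\pi)$.

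First I would carry out a pointwise Bayes analysis. Write $\bar\tY(\omega)=\Theta(\omega)-\pi+\tE(\omega)$ and use $|\sign a-\sign b|=2\,\mathbb{1}\{\sign a\neq\sign b\}$. For a fixed $\omega$ and candidate sign $f\in\{-1,+1\}$, a direct calculation gives $\mathbb{E}_{\tE}[|\bar\tY(\omega)|\cdot|f-\sign\bar\tY(\omega)|] = 2\,\mathbb{E}_{\tE}[\bar\tY(\omega)^-]$ when $f=+1$ and $2\,\mathbb{E}_{\tE}[\bar\tY(\omega)^+]$ when $f=-1$. Since $\tE$ is mean-zero, $\mathbb{E}_{\tE}[\bar\tY(\omega)]=\Theta(\omega)-\pi$, so the pointwise minimiser is $f^\ast(\omega)=\sign(\Theta(\omega)-\pi)$ and the pointwise excess loss is $2|\Theta(\omega)-\pi|$ on the disagreement event and $0$ otherwise. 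Integrating against $\omega\sim\Pi$,
\[
\risk(\tZ)-\risk(\bar\Theta)=2\,\mathbb{E}_{\omega\sim\Pi}\bigl[|\Theta(\omega)-\pi|\,\mathbb{1}\{\sign\tZ(\omega)\neq\sign(\Theta(\omega)-\pi)\}\bigr],
\]
while $\textup{MAE}(\sign\tZ,\sign\bar\Theta)=2\,\mathbb{P}_{\omega\sim\Pi}[\sign\tZ(\omega)\neq\sign(\Theta(\omega)-\pi)]$. This is consistent with Proposition~\ref{prop:global} and reduces the theorem to comparing an unweighted probability with a weighted expectation over the same event.

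Next I would run a peeling argument. Let $A=\{\omega:\sign\tZ(\omega)\neq\sign(\Theta(\omega)-\pi)\}$, $\Delta=\risk(\tZ)-\risk(\bar\Theta)$, and $\rho=\rho(\pi,\tN)$. For any threshold $t\in(0,\rho)$, split $A=A_{\le t}\cup A_{>t}$ according to whether $|\Theta(\omega)-\pi|\le t$. Assumption~\ref{ass:margin} bounds the first piece as
\[
\mathbb{P}(A_{\le t})\le \mathbb{P}\bigl[|\Theta(\omega)-\pi|\le t\bigr]\le c(\pi)\,t^{\alpha},
\]
and Markov's inequality applied to $|\Theta-\pi|\,\mathbb{1}_{A}$ bounds the second as $\mathbb{P}(A_{>t})\le \Delta/(2t)$. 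Adding the two, $\mathbb{P}(A)\le c(\pi)\,t^{\alpha}+\Delta/(2t)$; minimising in $t$ (optimum at $t_\ast\propto\Delta^{1/(\alpha+1)}$) yields $\mathbb{P}(A)\le C(\pi)\,\Delta^{\alpha/(\alpha+1)}$, and doubling gives the claimed bound on $\textup{MAE}$.

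The one delicate point, and the step I expect to require the most care, is verifying that the optimal threshold $t_\ast$ actually lies inside $(0,\rho(\pi,\tN))$, so that Assumption~\ref{ass:margin} applies at $t=t_\ast$. This holds whenever $\Delta$ is smaller than a constant depending on $c(\pi),\alpha(\pi),\rho(\pi,\tN)$; in the opposite regime, $\textup{MAE}\le 2$ is trivially at most a constant multiple of $\Delta^{\alpha/(\alpha+1)}$, and one enlarges $C(\pi)$ to absorb this boundary case. The assumption $|\tN|\le C$ in the excerpt is precisely what guarantees $\rho(\pi,\tN)>0$ for every admissible $\pi\notin\tN$, making the dichotomy valid.
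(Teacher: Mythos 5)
Your proposal is correct and follows essentially the same route as the paper: the pointwise Bayes computation reproduces the excess-risk identity $\risk(\tZ)-\risk(\bar\Theta)=\mathbb{E}_{\omega\sim\Pi}\left[\left|\sign\tZ(\omega)-\sign(\Theta(\omega)-\pi)\right|\,\left|\Theta(\omega)-\pi\right|\right]$ established in Proposition~\ref{prop:global}, and your split of the disagreement event at threshold $t<\rho(\pi,\tN)$ combined with Assumption~\ref{ass:margin} and optimization of $t$ is exactly the paper's argument, merely optimized in terms of the excess risk rather than in terms of the MAE, with the out-of-range threshold handled by enlarging $C(\pi)$ just as the paper absorbs its $\rho^{-1}\left[\risk(\tZ)-\risk(\bar\Theta)\right]$ term. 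The only cosmetic omission is the convention $\alpha=\infty$, which your same splitting inequality handles by simply taking $t=\rho(\pi,\tN)/2$, as the paper does in its second case.
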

The result establishes the recovery stability of sign tensors $\sign (\Theta-\pi)$ using optimization with population risk~\eqref{eq:population}. The bound immediately shows the uniqueness of the optimizer  for $\text{Risk}(\cdot)$ up to a zero-measure set under $\Pi$. We find that a higher value of $\alpha$ implies more stable recovery, as intuition would suggest. Similar results hold for optimization with sample risk~\eqref{eq:sample} (see Section~\ref{sec:estimation}). 

We conclude this section by applying Assumption~\ref{ass:margin} to the examples described in Section~\ref{sec:sign-rank}. For simplicity, suppose $\Pi$ is the uniform sampling for now. The tensor block model is $\infty$-globally smooth. This is because the set $\tN$, which consists of distinct block means in $\Theta$, has finitely many elements. Furthermore, we have $\alpha= \infty$ for all $\pi \notin \tN$, since the numerator in~\eqref{eq:smooth} is zero for all such $\pi$. The min/max hypergaphon model with a $r$-degree polynomial function is $1$-globally smooth because $\alpha=1$ for all $\pi$ in the function range except at most $(r-1)$ many stationary points.

\section{Nonparametric tensor completion via sign series}\label{sec:estimation}
In previous sections we have established the sign series representation and its relationship to classification. In this section, we present our algorithm proposed in Section~\ref{sec:overview} (Figure~\ref{fig:demo}) in details. We provide the estimation error bound and address the empirical implementation of the algorithm. 

\subsection{Estimation error and sample complexity}

Given a noisy incomplete tensor observation $\tY_\Omega$ from model~\eqref{eq:model}, we cast the problem of estimating $\Theta$ into a series of weighted classifications. Specifically we propose the tensor estimate using the sign representation,
\begin{equation}\label{eq:est}
\hat \Theta = {1\over 2H+1}\sum_{\pi \in \tH}\sign{\hat \tZ_\pi},
\end{equation}
where $\hat \tZ_\pi\in\mathbb{R}^{d_1\times \dots\times d_K}$ is the $\pi$-weighted classifier estimated at levels $\pi \in \tH=\{-1,\ldots,-{1\over H}, 0, {1\over H},\ldots,1\}$,
\begin{equation}\label{eq:estimate}
\hat \tZ_\pi = \argmin_{\tZ\colon \text{rank}\tZ\leq r} L(\tZ, \tY_\Omega-\pi).
\end{equation}
Here $L(\cdot,\cdot)$ denotes the weighted classification objective defined in~\eqref{eq:sample}, where we have plugged $\bar \tY_\Omega=(\tY_\Omega-\pi)$ in the expression, and the rank constraint follows from Proposition~\ref{prop:global}. For the theory, we assume the true $r$ is known; in practice, $r$ could be chosen in a data adaptive fashion via cross-validation or elbow method~\citep{hastie2009elements}. Step  \eqref{eq:estimate} corresponds Figure~\ref{fig:demo}c while  \eqref{eq:est} to Figure~\ref{fig:demo}d .

The next theorem establishes the statistical convergence for the sign tensor estimate~\eqref{eq:estimate}, which is an important ingredient for the final signal tensor estimate $\hat \Theta$ in~\eqref{eq:est}. \\

 \begin{thm}[Sign tensor estimation]\label{thm:classification} Suppose $\Theta\in\caliP(r)$ and $\Theta(\omega)$ is $\alpha$-globally smooth under $\omega\sim \Pi$. Let $\hat \tZ_\pi$ be the estimate in~\eqref{eq:estimate}, $d_{\max}=\max_{k\in[K]} d_k$, and $d_{\max}r\lesssim |\Omega|$. Then, for all $\pi\in[-1,1]$ except for a finite number of levels, with very high probability over $\tY_\Omega$, 
\begin{align}\label{eq:bound}
\textup{MAE}(\sign \hat \tZ_\pi, \sign(\Theta-\pi)) \lesssim  \left({d_{\max} r\over |\Omega|}\right)^{\alpha\over \alpha+2}+{1 \over \rho^2(\pi, \tN)} {d_{\max}r \over |\Omega|}.
\end{align}
\end{thm}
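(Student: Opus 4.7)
My plan is to reduce the estimation of $\sign(\Theta-\pi)$ to controlling the excess weighted-classification risk of $\hat{\tZ}_\pi$, and then combine empirical-process concentration with the margin condition induced by $\alpha$-smoothness. Fix any $\bar\Theta\simeq \sign(\Theta-\pi)$ with $\rank(\bar\Theta)\le r$, whose existence follows from $\Theta\in\caliP(r)$ and Proposition~\ref{prop:global}, and write $\Delta(\tZ)=\risk(\tZ)-\risk(\bar\Theta)$. A direct computation (essentially the one underlying Theorem~\ref{thm:population}) gives
\[
\Delta(\tZ)=2\,\mathbb{E}_{\omega\sim\Pi}\!\bigl[|\Theta(\omega)-\pi|\,\mathbf{1}\{\sign\tZ(\omega)\ne\sign(\Theta(\omega)-\pi)\}\bigr].
\]
Splitting this expectation at a threshold $t$ and optimizing over $t$ via Assumption~\ref{ass:margin} produces a \emph{two-regime} bound,
\[
\textup{MAE}(\sign\tZ,\sign(\Theta-\pi))\ \lesssim\ \max\!\bigl\{\Delta(\tZ)^{\alpha/(\alpha+1)},\ \Delta(\tZ)/\rho(\pi,\tN)\bigr\},
\]
where the first term handles the case $t^{\ast}<\rho(\pi,\tN)$ (the range in which~\eqref{eq:smooth} is valid) and the second handles $t^{\ast}\ge\rho(\pi,\tN)$.

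\textbf{Excess-risk control.} Since $\bar\Theta$ is feasible for~\eqref{eq:estimate} and $\hat{\tZ}_\pi$ minimizes the empirical loss, the standard ERM decomposition gives $\Delta(\hat{\tZ}_\pi)\le 2\sup_{\tZ:\,\rank\tZ\le r}|\risk(\tZ)-L(\tZ,\tY_\Omega-\pi)|$. I would bound this supremum by a Bernstein/Talagrand-type concentration inequality combined with a metric-entropy estimate for rank-$r$ order-$K$ tensors: parameterizing by the $K$ factor matrices (each $d_k\times r$) and a length-$r$ core, an $\epsilon$-net yields a log covering number of order $d_{\max}r$ (up to logs). The key refinement is a variance bound on the loss difference: since $|\ell_\tZ-\ell_{\bar\Theta}|=2|\bar Y(\omega)|\mathbf{1}\{\sign\tZ(\omega)\ne\sign\bar\Theta(\omega)\}$ and $|\bar Y|\le 2$, one obtains $\sigma^2(\tZ):=\textup{Var}(\ell_\tZ-\ell_{\bar\Theta})\lesssim \textup{MAE}(\sign\tZ,\sign\bar\Theta)$, which the margin display above then converts into a bound in terms of $\Delta(\tZ)$ itself.

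\textbf{Fixed-point argument.} Plugging this variance bound into Talagrand's inequality with the rank-$r$ covering yields
\[
\Delta(\hat{\tZ}_\pi)\ \lesssim\ \sqrt{\sigma^2(\hat{\tZ}_\pi)\,d_{\max}r/|\Omega|}\ +\ d_{\max}r/|\Omega|.
\]
In the smoothness regime where $\sigma^2\lesssim\Delta^{\alpha/(\alpha+1)}$, solving the resulting self-referential inequality gives $\Delta(\hat{\tZ}_\pi)\lesssim(d_{\max}r/|\Omega|)^{(\alpha+1)/(\alpha+2)}$, so that $\textup{MAE}\lesssim (d_{\max}r/|\Omega|)^{\alpha/(\alpha+2)}$ --- the first term of~\eqref{eq:bound}. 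In the boundary regime where $\sigma^2\lesssim\Delta/\rho(\pi,\tN)$, the same fixed point gives $\Delta(\hat{\tZ}_\pi)\lesssim d_{\max}r/(\rho(\pi,\tN)|\Omega|)$, and the additional factor $1/\rho(\pi,\tN)$ coming from the MAE-to-$\Delta$ conversion produces the second term $d_{\max}r/(\rho^2(\pi,\tN)|\Omega|)$. Taking the maximum gives the claimed bound, and the ``finite number of levels'' exception corresponds to $\tN$ together with at most a handful of stationary levels where Assumption~\ref{ass:margin} degenerates.

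\textbf{Main obstacle.} The hardest step is the sharp entropy/local-Rademacher estimate for the sign-pattern class of rank-$r$ order-$K$ tensors: the natural factor parameterization has $\sum_k d_k r+r$ scalars and is non-unique up to scaling, so one must normalize the factors carefully so that the resulting empirical process is effectively indexed by $O(d_{\max}r)$ parameters and so that the $\sign$ nonlinearity does not inflate the covering. A second, more delicate point is the bookkeeping of the boundary regime: the $1/\rho^2(\pi,\tN)$ factor --- rather than $1/\rho(\pi,\tN)$ --- emerges only because one pays $1/\rho(\pi,\tN)$ \emph{twice}, once when relating $\sigma^2$ to $\Delta$ and once when relating MAE to $\Delta$. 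Once these pieces are in place the remainder is a routine peeling argument in the Mammen--Tsybakov style, matched to the margin exponent $\alpha$.
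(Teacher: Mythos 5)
Your proposal is correct and follows essentially the same route as the paper's proof: the two-regime margin conversion between MAE and excess risk (paying $1/\rho(\pi,\tN)$ there), the variance-to-mean bound $\mathrm{Var}(\ell_{\tZ}-\ell_{\bar\Theta})\lesssim \textup{MAE}\lesssim \Delta^{\alpha/(\alpha+1)}+\Delta/\rho$, a localized empirical-process/fixed-point argument over the rank-$r$ class with entropy of order $d_{\max}r$, and the composition of the two conversions that produces the $1/\rho^2$ term. The only cosmetic difference is that you invoke Talagrand-type concentration with a covering-number fixed point, whereas the paper cites the bracketing-entropy local iterative technique of Wang and Shen (2008) together with the covering bound of Mu et al.\ for bounded rank-$r$ tensors; these are interchangeable implementations of the same localization step.
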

Theorem~\ref{thm:classification} provides the error bound for the sign tensor estimation. Compared to the population results in Theorem~\ref{thm:population}, we here explicitly reveal the dependence of accuracy on the sample complexity and on the level $\pi$. The result demonstrates the polynomial decay of sign errors with $|\Omega|$. In particular, our sign estimate achieves consistent recovery using as few as $\tilde O(d_{\max}r)$ noisy entries. 

Combining the sign representability of the signal tensor and the sign estimation accuracy, we obtain the main results on our nonparametric tensor estimation method.\\

\begin{thm}[Tensor estimation error]\label{thm:estimation} Consider the same conditions of Theorem~\ref{thm:classification}. Let $\hat \Theta$ be the estimate in~\eqref{eq:est}. With very high probability over $\tY_\Omega$,
\begin{equation}\label{eq:bound2}
\textup{MAE}(\hat \Theta, \Theta)\lesssim \left({d_{\max} r \over |\Omega|}\right)^{\alpha\over\alpha+2}+{1\over H}+{Hd_{\max} r \over |\Omega|}.
\end{equation}
In particular, setting $\scriptstyle H\asymp \left( |\Omega|\over d_{\max}r\right)^{1/2}$ yields the error bound
\begin{equation}\label{eq:real}
\textup{MAE}(\hat \Theta, \Theta)\lesssim \left(d_{\max}r \over |\Omega|\right)^{{\alpha \over \alpha+2} \vee {1\over 2}}.
\end{equation}
\end{thm}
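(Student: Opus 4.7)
The plan is to decompose $\textup{MAE}(\hat\Theta,\Theta)$ into a deterministic discretization error plus a stochastic estimation error averaged over the grid $\tH$. First, for any $a\in[-1,1]$ one has the elementary identity $a = \tfrac{1}{2}\int_{-1}^{1}\sign(a-\pi)\,d\pi$, which motivates the proxy tensor $\Theta^{*}(\omega):=\tfrac{1}{2H+1}\sum_{\pi\in\tH}\sign(\Theta(\omega)-\pi)$. Since this is a Riemann sum with spacing $1/H$ of a piecewise-constant integrand in $\pi$, a direct entrywise calculation gives $|\Theta^{*}(\omega)-\Theta(\omega)|\lesssim 1/H$, hence $\textup{MAE}(\Theta^{*},\Theta)\lesssim 1/H$. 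By the triangle inequality and linearity of the sign average,
\[
\textup{MAE}(\hat\Theta,\Theta)\le \textup{MAE}(\Theta^{*},\Theta)+\frac{1}{2H+1}\sum_{\pi\in\tH}\textup{MAE}\!\left(\sign\hat\tZ_{\pi},\,\sign(\Theta-\pi)\right).
\]

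Second, I apply Theorem~\ref{thm:classification} at every $\pi\in\tH\setminus\tN$; the finitely many grid points that might hit $\tN$ are either dropped or removed by an $o(1/H)$ offset of the grid, contributing only $O(1/H)$ extra to the discretization term. Each Theorem~\ref{thm:classification} event fails with probability faster than any polynomial in $d$, and $|\tH|=2H+1$ will be polynomial in $d$, so a union bound preserves the very-high-probability guarantee. Averaging the per-level bound yields
\[
\frac{1}{2H+1}\sum_{\pi\in\tH}\textup{MAE}\!\left(\sign\hat\tZ_{\pi},\sign(\Theta-\pi)\right) \lesssim \left(\frac{d_{\max}r}{|\Omega|}\right)^{\alpha/(\alpha+2)} + \frac{d_{\max}r}{|\Omega|}\cdot\frac{1}{2H+1}\sum_{\pi\in\tH}\frac{1}{\rho^{2}(\pi,\tN)}.
\]

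Third, the key combinatorial estimate is that the averaged $\rho^{-2}$ sum is $\lesssim H$. Since $|\tN|\le C$ is a constant and $\tH$ has uniform spacing $1/H$, for each $\pi_{j}\in\tN$ the grid points ordered by distance from $\pi_{j}$ sit at separations at least $k/H$ for $k=1,2,\dots$, so $\sum_{\pi\in\tH}\rho^{-2}(\pi,\tN)\le 2CH^{2}\sum_{k\ge 1}k^{-2}\lesssim H^{2}$, and dividing by $2H+1$ produces $O(H)$. Assembling everything gives the three-term bound~\eqref{eq:bound2}, and the calibration $H\asymp\sqrt{|\Omega|/(d_{\max}r)}$ equates $1/H$ with $Hd_{\max}r/|\Omega|$, both becoming $(d_{\max}r/|\Omega|)^{1/2}$. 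Since the remaining $(d_{\max}r/|\Omega|)^{\alpha/(\alpha+2)}$ and $(d_{\max}r/|\Omega|)^{1/2}$ must both be dominated, the overall rate is controlled by whichever of the two exponents is relevant, yielding~\eqref{eq:real}.

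The main obstacle is controlling the $\rho^{-2}(\pi,\tN)$ blow-up of grid points approaching the mass-point set $\tN$: without the finiteness of $|\tN|$ and the uniform $1/H$ spacing, the averaged sum could exceed $O(H)$ and spoil the balancing that produces the square-root rate; the convergence of $\sum k^{-2}$ is precisely what saves the bound. A secondary concern is preserving very-high-probability through the $(2H+1)$-fold union bound, which is automatic because the failure tail of Theorem~\ref{thm:classification} is super-polynomial in $d$ while $H$ is only polynomial.
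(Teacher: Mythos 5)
Your overall architecture matches the paper's: the same triangle-inequality split into a $1/H$ discretization bias plus the averaged sign errors, an application of Theorem~\ref{thm:classification} level by level, a harmonic-type summation of the $\rho^{-2}(\pi,\tN)$ terms, and the balancing $H\asymp(|\Omega|/d_{\max}r)^{1/2}$. However, your key combinatorial estimate in the third step has a genuine gap. You claim that for each mass point $\pi'\in\tN$ the grid points of $\tH$, ordered by distance to $\pi'$, lie at distances at least $k/H$, so that $\sum_{\pi\in\tH}\rho^{-2}(\pi,\tN)\lesssim H^{2}\sum_{k\geq 1}k^{-2}$. This is false for the nearest grid point(s): the mass points of $\tN$ are arbitrary reals, not grid-aligned, so the closest $\pi\in\tH$ to $\pi'$ can lie at distance $\varepsilon$ arbitrarily smaller than $1/H$ (e.g.\ $\varepsilon=H^{-10}$), in which case that single term contributes $\varepsilon^{-2}\gg H^{2}$ and the averaged sum is no longer $O(H)$, destroying the $Ht_n$ term and hence the square-root rate. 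Your step-two remark about dropping ``the finitely many grid points that might hit $\tN$'' only covers exact coincidence $\pi\in\tN$ (the levels excluded by Theorem~\ref{thm:classification}); it does not protect against levels that are close to, but distinct from, mass points, for which Theorem~\ref{thm:classification} applies but its bound blows up through $\rho^{-2}$. Also, ``offsetting the grid'' is not available: the estimator is defined with the fixed grid $\tH$, so the analysis cannot move it.

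The fix is the one the paper makes explicit: define $\tN_H=\bigcup_{\pi'\in\tN}\left(\pi'-\tfrac1H,\pi'+\tfrac1H\right)$, split the average over $\tH\cap\tN_H$ and $\tH\cap\tN_H^{c}$, bound each term in the first group trivially by a constant (the MAE of two sign tensors is at most $2$), so that, since $|\tN|\leq C$ forces only $O(C)$ grid points in $\tN_H$, this group contributes $O(1/H)$ and is absorbed into the bias term; only on $\tH\cap\tN_H^{c}$, where $\rho(\pi,\tN)>1/H$ by construction, does your $\sum_k k^{-2}$ argument become valid (this is the paper's Lemma~\ref{lem:H}, giving $\sum_{\pi\in\tH\cap\tN_H^c}\rho^{-2}(\pi,\tN)\leq 4H^{2}$). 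With that partition inserted, the rest of your argument—including the union bound over the $2H+1$ levels to preserve the very-high-probability guarantee, which the paper leaves implicit—goes through and reproduces \eqref{eq:bound2} and \eqref{eq:real}.
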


Theorem~\ref{thm:estimation} demonstrates the convergence rate of our tensor estimation. The bound~\eqref{eq:bound2} reveals three sources of errors: the estimation error for sign tensors, the bias from sign series representations, and the variance thereof. The resolution parameter $H$ controls the bias-variance tradeoff. We remark that the signal estimation error~\eqref{eq:real} is generally no better than the corresponding sign error~\eqref{eq:bound}. This is to be expected, since magnitude estimation is  a harder problem than sign estimation. 

In the special case of full observation with equal dimension $d_k=d, k\in[K]$, our signal estimate achieves convergence
\begin{equation}
\textup{MAE}(\hat \Theta, \Theta)\lesssim \left(r\over d^{K-1}\right)^{{\alpha \over \alpha+2} \vee {1\over 2}}.
\end{equation}
Compared to earlier methods, our estimation accuracy applies to both low- and high-rank signal tensors. The rate depends on the sign complexity $\Theta\in\caliP(r)$, and this $r$ is often much smaller than the usual tensor rank (see Section~\ref{sec:sign-rank}). Our result also reveals that the convergence becomes favorable as the order of data tensor increases. 

We apply our method to the main examples in Section~\ref{sec:sign-rank}, and compare the results with existing literature. The numerical comparison is provided in Section~\ref{sec:simulation}. \\

\begin{customexample}{2}[TBM]
Consider a tensor block model with in total $r$ multiway blocks. Our result implies a rate $\tO(d^{-(K-1)/2})$ by taking $\alpha=\infty$. This rate agrees with the  previous root-mean-square error (RMSE) for block tensor estimation~\citep{wang2019multiway}.\\
\end{customexample}

\begin{customexample}{3} [GLM] 
Consider a GLM tensor $\Theta=g(\tZ)$, where $g$ is a known link function and $\tZ$ is a latent low-rank tensor. Suppose the marginal density of $\Theta(\omega)$ is bounded as $d\to\infty$. Applying our results with $\alpha=1$ yields $\tO(d^{-(K-1)/3})$. This rate is slightly slower than the parametric RMSE rate~\citep{zhang2018tensor,wang2018learning}. One possible reason is that our estimate remains valid for unknown $g$ and general high-rank tensors with $\alpha=1$. The nonparametric rate is the price one has to pay for not knowing the form $\Theta=g(\tZ)$ as a priori. 
\end{customexample}

The following sample complexity for nonparamtric tensor completion is a direct consequence of Theorem~\ref{thm:estimation}. 
\begin{cor}[Sample complexity for nonparametric completion] Under the same conditions of Theorem~\ref{thm:estimation} with $\alpha\neq 0$, with high probability over $\tY_\Omega$, 
\[
\textup{MAE}(\hat \Theta, \Theta)\to 0, \quad \text{as}\quad {|\Omega|\over {d_{\max}} r}\to \infty.
\]
\end{cor}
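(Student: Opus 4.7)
The plan is to read off the corollary directly from Theorem~\ref{thm:estimation}, so the argument is essentially bookkeeping on the rate. Specifically, I would invoke the bound~\eqref{eq:real}: choosing $H\asymp (|\Omega|/d_{\max}r)^{1/2}$, Theorem~\ref{thm:estimation} gives
\[
\textup{MAE}(\hat\Theta,\Theta)\ \lesssim\ \left(\frac{d_{\max}r}{|\Omega|}\right)^{\beta},\qquad \beta:=\frac{\alpha}{\alpha+2}\vee\frac{1}{2},
\]
with very high probability over $\tY_\Omega$. Since ``very high probability'' in the sense of the paper is strictly stronger than ``high probability'' (it tends to $1$ faster than any polynomial in $d$), this probabilistic qualifier is immediately inherited by the statement of the corollary.

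Next, I would verify that the exponent $\beta$ is strictly positive. Under Assumption~\ref{ass:margin} the smoothness index $\alpha$ takes values in $(0,\infty]$, and the corollary further assumes $\alpha\neq 0$. Hence $\tfrac{\alpha}{\alpha+2}>0$, and in fact $\beta\geq \tfrac{1}{2}>0$ by construction. Consequently, whenever the ratio $|\Omega|/(d_{\max}r)\to\infty$, the quantity $d_{\max}r/|\Omega|$ tends to zero, and raising it to the positive power $\beta$ yields $\textup{MAE}(\hat\Theta,\Theta)\to 0$, which is the conclusion.

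The main ``obstacle'' is simply a consistency check that the hypotheses of Theorem~\ref{thm:estimation} remain in force in the asymptotic regime of interest. The condition $d_{\max}r\lesssim |\Omega|$ is automatic once $|\Omega|/(d_{\max}r)\to\infty$; and the choice $H\asymp(|\Omega|/d_{\max}r)^{1/2}$ grows to infinity under the same limit, which is precisely what drives both the bias term $1/H$ and the variance term $Hd_{\max}r/|\Omega|$ in~\eqref{eq:bound2} to zero and thereby produces the clean rate~\eqref{eq:real}. Beyond this routine check, there is nothing nontrivial to establish.
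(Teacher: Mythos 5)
Your proposal is correct and matches the paper, which states the corollary as a direct consequence of Theorem~\ref{thm:estimation}: one simply reads off the rate~\eqref{eq:real}, notes the exponent $\frac{\alpha}{\alpha+2}\vee\frac{1}{2}>0$, and observes that the very-high-probability guarantee subsumes the high-probability claim. Your additional consistency checks (that $d_{\max}r\lesssim|\Omega|$ and $H\to\infty$ hold automatically in the stated limit) are sound and in the spirit of the paper's implicit argument.
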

Our result improves earlier work~\citep{yuan2016tensor,ghadermarzy2019near,pmlr-v119-lee20i} by allowing both low- and high-rank signals. Interestingly, the sample requirements depend only on the sign complexity $r$ but not the nonparametric complexity $\alpha$. Note that $\tilde \tO(d_{\max}r)$ roughly matches the degree of freedom of sign tensors, suggesting the optimality of our sample requirements. 

\subsection{Numerical implementation}
This section addresses the practical implementation of our estimation~\eqref{eq:est} illustrated in Figure~\ref{fig:demo}. Our sign representation of the signal estimate $\hat \Theta$ is an average of $2H+1$ sign tensors, which can be solved in a divide-and-conquer fashion. Briefly, we estimate the sign tensors $\tZ_\pi$ (detailed in the next paragraph) for the series $\pi \in \tH$ through parallel implementation, and then we aggregate the results to yield the output. The estimate enjoys low computational cost similar to a single sign tensor estimation.

For the sign tensor estimation~\eqref{eq:estimate}, the problem reduces to binary tensor decomposition with a weighted classification loss. A number of algorithms have been developed for this problem~\citep{ghadermarzy2018learning,wang2018learning,hong2020generalized}. We adopt similar ideas by tailoring the algorithms to our contexts. Following the common practice in classification, we replace the binary loss $\ell(z,y)=|\sign z - \sign y|$ with a surrogate loss $F(m)$ using a continuous function of margin $m:=z\sign(y)$. Examples of large-margin loss are hinge loss $F(m) = (1-m)_+$, logistic loss $F(m) =\log(1+e^{-m})$, and nonconvex $\psi$-loss $F(m)=2\min(1,(1-m)_+)$ with $m_{+}=\max(m,0)$. We implement the hinge loss and logistic loss in our algorithm, although our framework is applicable to general large-margin losses~\citep{bartlett2006convexity}.

\begin{algorithm}[h!]
  \caption{Nonparametric tensor completion}\label{alg:tensorT}
 \begin{algorithmic}[1] 
\INPUT  Noisy and incomplete data tensor $\tY_\Omega$, rank $r$, resolution parameter $H$.
\For {$\pi \in \tH=\{ -1, \ldots, -{1\over H}, 0, {1\over H},\ldots, 1\}$}
\State Random initialization of tensor factors $\mA_k=[\ma^{(k)}_1,\ldots,\ma^{(k)}_r]\in\mathbb{R}^{d_k\times r}$ for all $k\in[K]$. 
\While{not convergence}
\For {$k=1,\ldots,K$}
\State Update $\mA_k$ while holding others fixed: $\mA_k\leftarrow \argmin_{\mA_k\in\mathbb{R}^{d_k\times r}}\sum_{\omega\in \Omega} |\tY(\omega)-\pi|F(\tZ(\omega)\sign(\tY(\omega)-\pi))$, where $F(\cdot)$ is the large-margin loss, and $\tZ=\sum_{s\in[r]} \ma^{(1)}_s\otimes \cdots\otimes \ma^{(K)}_s$ is a rank-$r$ tensor. 
\EndFor
\EndWhile
\State Return $\tZ_\pi\leftarrow \sum_{s\in[r]} \ma^{(1)}_s\otimes \cdots\otimes \ma^{(K)}_s$.
\EndFor
\OUTPUT Estimated signal tensor $\hat \Theta={1\over 2H+1}\sum_{\pi \in \tH}\sign(\tZ_\pi)$.
    \end{algorithmic}
\end{algorithm}

The rank constraints in the optimization~\eqref{eq:estimate} have been extensively studied in literature. Recent developments involve convex norm relaxation~\citep{ghadermarzy2018learning} and nonconvex optimization~\citep{wang2018learning, han2020optimal}. Unlike matrices, computing the tensor convex norm is NP hard, so we choose (non-convex) alternating optimization due to its numerical efficiency. Briefly, we use the rank decomposition~\eqref{eq:CP} of $\tZ=\tZ(\mA_1,\ldots, \mA_K)$ to optimize the unknown factor matrices $\mA_k=[\ma^{(k)}_1,\ldots,\ma^{(k)}_r]\in\mathbb{R}^{d_k\times r}$, where we choose to collect tensor singular values into $\mA_K$. We numerically solve \eqref{eq:est} by optimizing one factor $\mA_k$ at a time while holding others fixed. Each suboptimization reduces to a convex optimization with a low-dimensional decision variable. Following common practice in tensor optimization~\citep{anandkumar2014tensor,hong2020generalized}, we run the optimization from multiple initializations to locate a final estimate with the lowest objective value. The full procedure is described in Algorithm~\ref{alg:tensorT}.

\section{Simulations}\label{sec:simulation}
In this section, we compare our nonparametric tensor method ({\bf NonParaT}) with two alternative approaches: low-rank tensor CP decomposition ({\bf CPT}), and the matrix version of our method applied to tensor unfolding ({\bf NonParaM}). We assess the performance under both complete and incomplete observations. The signal tensors are generated based on four models listed in Table~\ref{tab:simulation}. The simulation covers a wide range of complexity, including block tensors, transformed low rank tensors, min/max hypergraphon with logarithm and exponential functions. We consider order-3  tensors of equal dimension $d_1=d_2=d_3=d$, and set $d\in \{15, 20,\ldots,55,60\}$, $r=2$, $H=10+{(d-15)/ 5}$ in Algorithm~\ref{alg:tensorT}. For {\bf NonParaM}, we apply Algorithm~\ref{alg:tensorT} to each of the three unfolded matrices and report the average error. All summary statistics are averaged across $30$ replicates.  

\begin{table*}[h]
\includegraphics[width=1\textwidth]{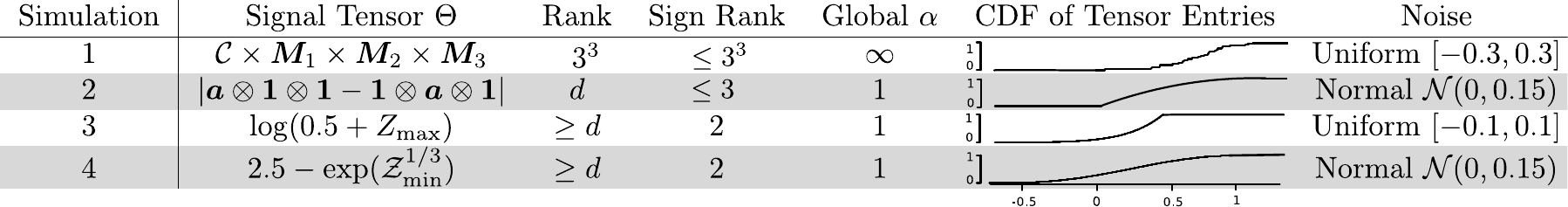}
\caption{Simulation models used for comparison. We use $\mM_k\in\{0,1\}^{d\times 3}$ to denote membership matrices, $\tC\in\mathbb{R}^{3\times 3\times 3}$ the block means, $\ma={1\over d}(1,2,\ldots,d)^T \in\mathbb{R}^d$, $\tZ_{\max}$ and $\tZ_{\min}$ are order-3 tensors with entries ${1\over d}\max(i,j,k)$ and ${1\over d}\min(i,j,k)$, respectively.}\label{tab:simulation}
\end{table*}

Figure~\ref{fig:compare1} compares the estimation error under full observation. The MAE decreases with tensor dimension for all three methods. We find that our method {\bf NonParaT} achieves the best performance in all scenarios, whereas the second best method is {\bf CPT} for models 1-2, and {\bf NonParaM} for models 3-4. One possible reason is that models 1-2 have controlled multilinear tensor rank, which makes tensor methods {\bf NonParaT} and {\bf CPT} more accurate than matrix methods. For models 3-4, the rank exceeds the tensor dimension, and therefore, the two nonparametric methods {\bf NonParaT} and {\bf NonparaM} exhibit the greater advantage for signal recovery.

Figure~\ref{fig:compare2} shows the completion error against observation fraction. We fix $d=40$ and gradually increase the observation fraction ${|\Omega|\over d^3}$ from 0.3 to 1. We find that {\bf NonParaT} achieves the lowest error among all methods. Our simulation covers a reasonable range of  complexities; for example, model 1 has $3^3$ jumps in the CDF of signal $\Theta$, and models 2 and 4 have unbounded noise. Nevertheless, our method shows good performance in spite of model misspecification. This robustness is appealing in practice because the structure of underlying signal tensor is often unknown. 

\begin{figure}[h!]
\includegraphics[width=\textwidth]{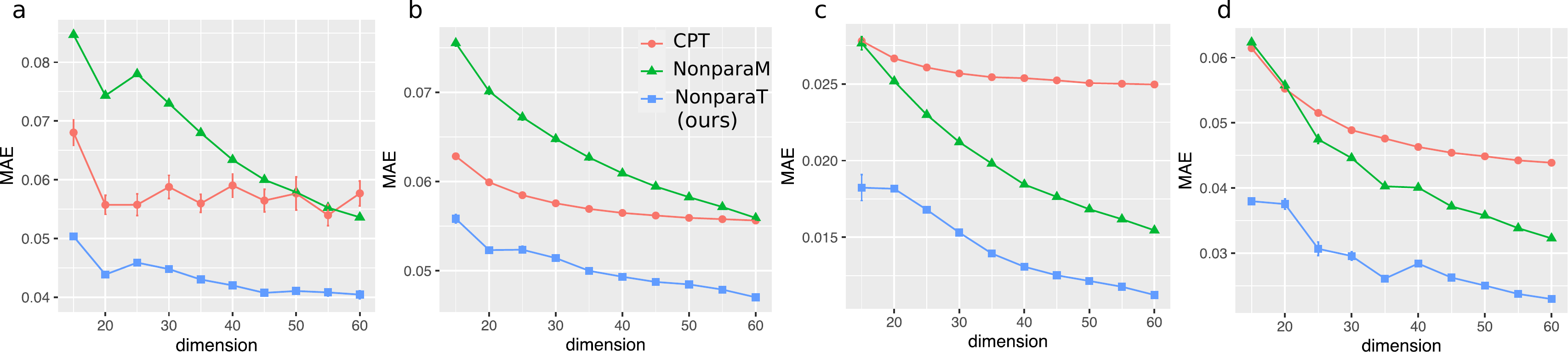}
\caption{Estimation error versus tensor dimension. Panels (a)-(d) correspond to simulation models 1-4 in Table~\ref{tab:simulation}.}\label{fig:compare1}
\end{figure}

\begin{figure}[h!]
\includegraphics[width=\textwidth]{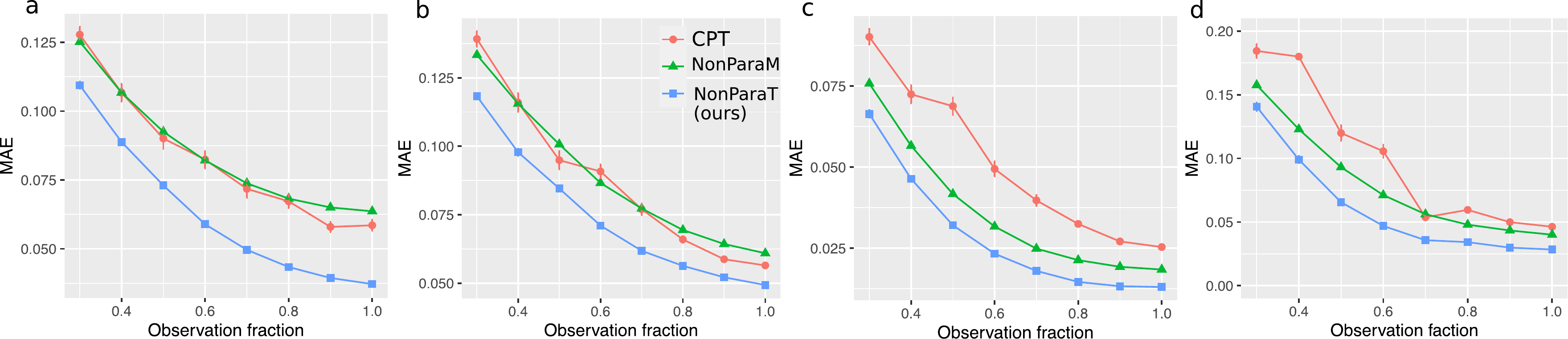}
\caption{Completion error versus observation fraction. Panels (a)-(d) correspond to simulation models 1-4 in Table~\ref{tab:simulation}. }\label{fig:compare2}
\end{figure}

\section{Data applications}
We apply our method to two tensor datasets, the MRN-114 human brain connectivity data~\citep{wang2017bayesian}, and NIPS word occurrence data~\citep{globerson2007euclidean}. 

\subsection{Brain connectivity analysis}
The brain dataset records the structural connectivity among 68 brain regions for 114 individuals along with their Intelligence Quotient (IQ) scores. We organize the connectivity data into an order-3 tensor, where entries encode the presence or absence of fiber connections between brain regions across individuals. 

\begin{figure}[h!]
\includegraphics[width = \textwidth]{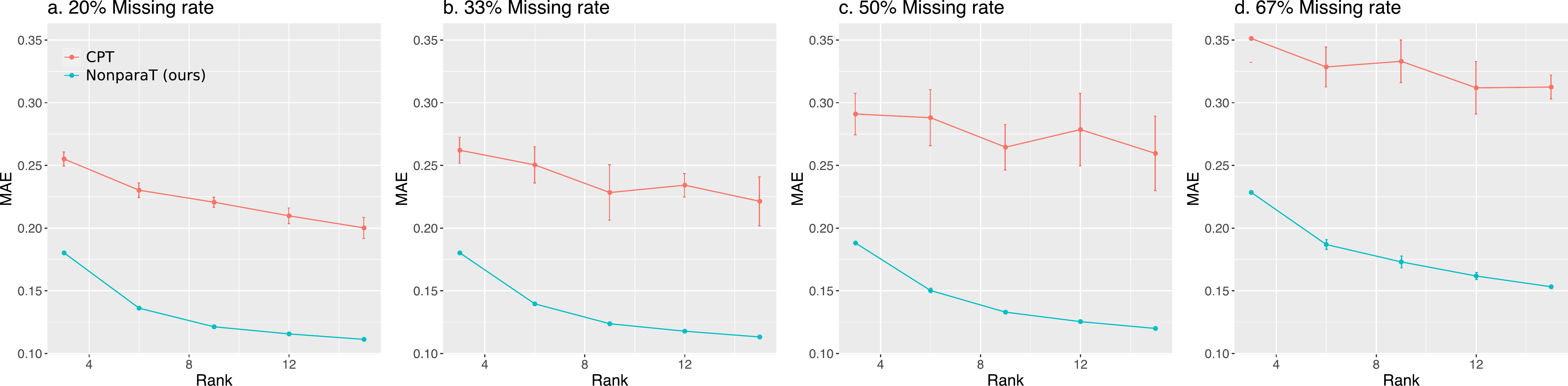}
\caption{Estimation error versus rank under different missing rate. Panels (a)-(d) correspond to missing rate 20\%, 33\%, 50\%, and 67\%, respectively. Error bar represents the standard error over 5-fold cross-validations.}\label{fig:braincv}
\end{figure}

Figure~\ref{fig:braincv} shows the MAE based on 5-fold cross-validations with $r = 3,6,\ldots, 15$ and $H = 20$. We find that our method outperforms CPT in all combinations of ranks and missing rates. The achieved error reduction appears to be more profound as the missing rate increases. This trend highlights the applicability of our method in tensor completion tasks. In addition, our method exhibits a smaller standard error in cross-validation experiments as shown in Figure~\ref{fig:braincv} and Table~\ref{tab:data}, demonstrating the stability over CPT.  One possible reason is that that our estimate is guaranteed to be in $[0,1]$ (for binary tensor problem where $\tY\in\{0,1\}^{d_1\times\cdots d_K}$) whereas CPT estimation may fall outside the valid range $[0,1]$.

\begin{table}[h!]
\centering
\resizebox{1\textwidth}{!}{
\begin{tabular}{c|c|c|c|c|c}
\Xhline{2\arrayrulewidth}
\multicolumn{6}{c}{MRN-114 brain connectivity dataset}\\
\Xhline{2\arrayrulewidth}
Method             &$r =3$        & $r=6$ &  $r=9$&$r=12$&$r = 15$\\
\hline
NonparaT (Ours)& ${\bf 0.18}(0.001)$ &$ {\bf 0.14}(0.001)$ & ${\bf 0.12}(0.001)$&${\bf 0.12}(0.001)$&${\bf 0.11}(0.001)$\\
Low-rank CPT &$0.26(0.006)$ & $0.23(0.006$)&$0.22(0.004)$&$0.21(0.006)$&$0.20(0.008)$\\
 \Xhline{2\arrayrulewidth}
 \multicolumn{6}{c}{NIPS word occurrence dataset}\\
 \Xhline{2\arrayrulewidth}
Method             &$r =3$        & $r=6$ &  $r=9$&$r=12$&$r = 15$\\
\hline
NonparaT (Ours) & ${\bf 0.18}(0.002)$ & ${\bf 0.16}(0.002)$ & ${\bf 0.15}(0.001)$& ${\bf 0.14}(0.001)$&${\bf 0.13}(0.001)$\\
 \hline
Low-rank CPT &$0.22(0.004)$ & $0.20(0.007)$ & $0.19(0.007)$&$0.17(0.007)$&$0.17(0.007)$\\
  \hline
Naive imputation (Baseline)& \multicolumn{5}{c}{$0.32(.001)$}
\end{tabular}
}
\caption{MAE comparison in the brain data and NIPS data analysis. Reported MAEs are averaged over five runs of cross-validation, with 20\% entries for testing and 80\% for training, with standard errors in parentheses. Bold numbers indicate the minimal MAE among three methods. For low-rank CPT, we use R function {\tt rTensor} with default hyperparameters, and for our method, we set $H=20$.}\label{tab:data}
\end{table}

We next investigate the pattern in the estimated signal tensor. Figure~\ref{fig:signal}a shows the identified top edges associated with IQ scores. Specifically, we first obtain a denoised tensor $\hat \Theta\in\mathbb{R}^{68\times 68\times 114}$ using our method with $r=10$ and $H=20$. Then, we perform a regression analysis of $\hat \Theta(i,j,\colon)\in\mathbb{R}^{144}$ against the normalized IQ score across the 144 individuals. The regression model is repeated for each edge $(i,j)\in[68]\times[68]$. We find that top edges represent the interhemispheric connections in the frontal lobes.  The result is consistent with recent research on brain connectivity with intelligence~\citep{li2009brain,wang2017bayesian}.

\subsection{NIPS data analysis}

The NIPS dataset consists of word occurrence counts in papers published from 1987 to 2003. We focus on the top 100 authors, 200 most frequent words, and normalize each word count by log transformation with pseudo-count 1. The resulting dataset is an order-3 tensor with entry representing the log counts of words by authors across years.

Table~\ref{tab:data} compares the prediction accuracy of different methods. We find that our method substantially outperforms the low-rank CP method for every configuration under consideration. Further increment of rank appears to have little effect on the performance. 
The comparison highlights the advantage of our method in achieving accuracy while maintaining low complexity. In addition, we also perform naive imputation where the missing values are predicted using the sample average. Both our method and CPT outperform the naive imputation, implying the necessity of incorporating tensor structure in the analysis.

\begin{figure}[h!]
\centering
\includegraphics[width=.39\textwidth]{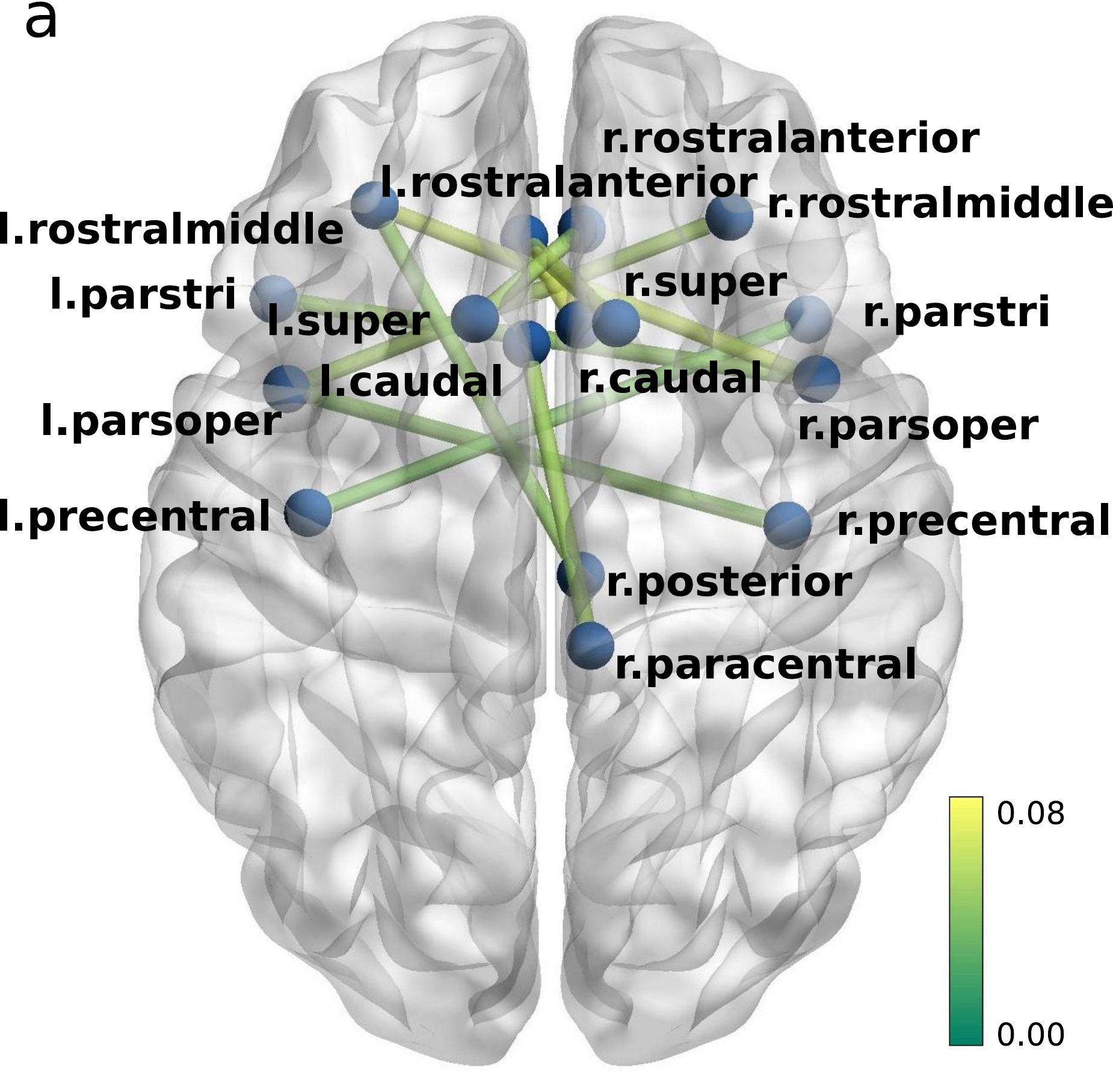}
\hspace{1cm}
\includegraphics[width=.495\textwidth]{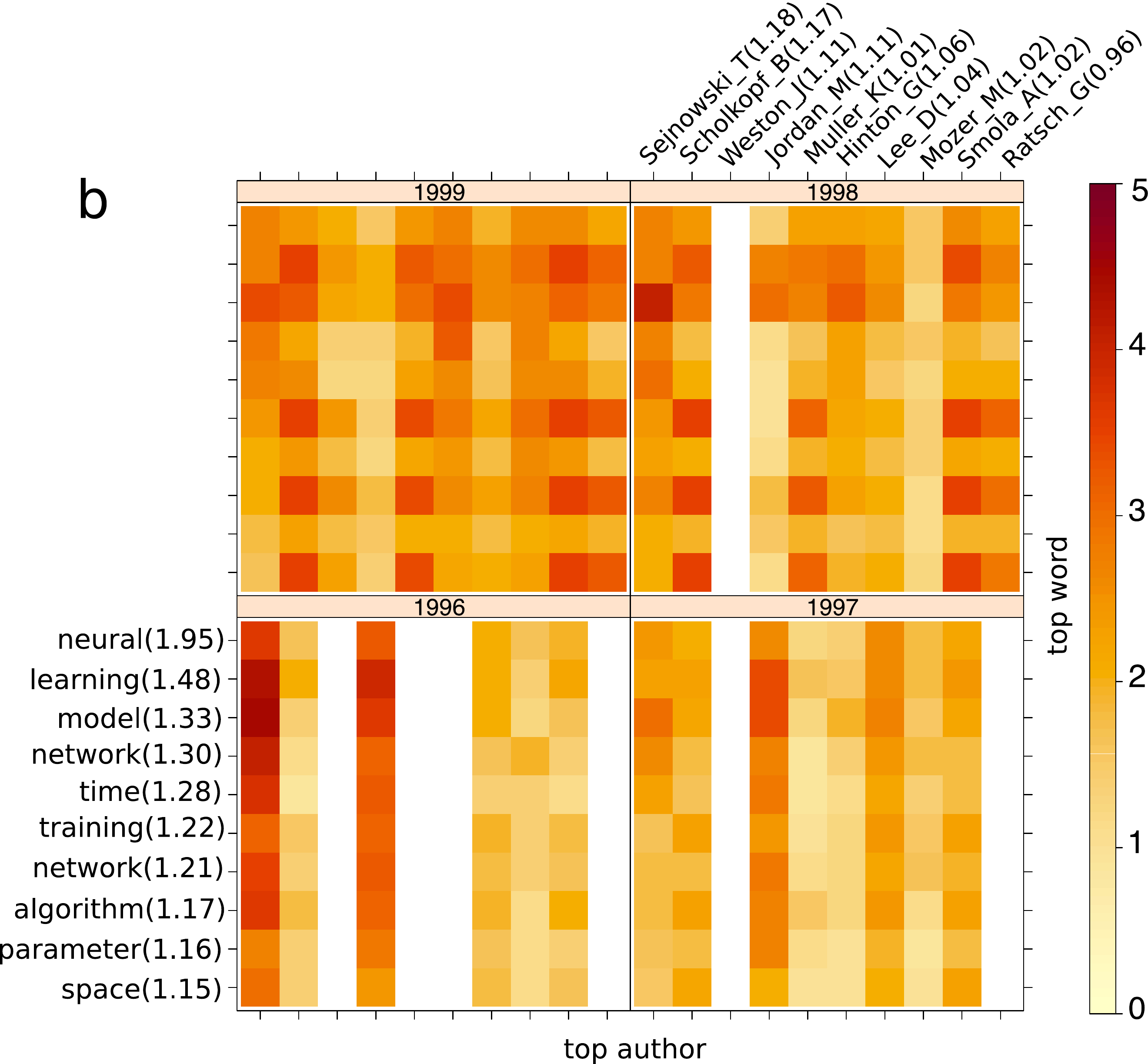}
\caption{Estimated signal tensors in the data analysis. (a) top edges associated with IQ scores in the brain connectivity data. The color indicates the estimated IQ effect size. (b) top authors and words for years 1996-1999 in the NIPS data. Authors and words are ranked by marginal averages based on $\hat \Theta$, where the marginal average is denoted in the parentheses.
}\label{fig:signal}
\end{figure}

We next examine the estimated signal tensor $\hat \Theta$ from our method.  Figure~\ref{fig:signal}b illustrates the results from NIPS data, where we plot the entries in $\hat \Theta$ corresponding to top authors and most-frequent words (after excluding generic words such as \emph{figure}, \emph{results}, etc). The identified pattern is consistent with the active topics in the NIPS publication. Among the top words are \emph{neural} (marginal mean = 1.95), \emph{learning} (1.48), and \emph{network} (1.21), whereas top authors are \emph{T.\ Sejnowski} (1.18), \emph{B.~Scholkopf} (1.17), \emph{M.\ Jordan} (1.11), and \emph{G.\ Hinton} (1.06). We also find strong heterogeneity among word occurrences across authors and years. For example, \emph{training} and \emph{algorithm} are popular words for \emph{B.\ Scholkopf} and \emph{A.\ Smola} in 1998-1999, whereas \emph{model} occurs more often in \emph{M.\ Jordan} and in 1996. The detected pattern and achieved accuracy demonstrate the applicability of our method.

\section{Additional results and proofs}
In this section, we provides additional results not covered in previous sections. Section~\ref{sec:additional} gives detailed explanation to the examples mentioned in Section~\ref{sec:intro}. Section~\ref{sec:high-rank} supplements Section~\ref{sec:sign-rank} by providing more theoretical results on sign rank and its relationship to tensor rank. Section~\ref{sec:proofs} collects the proofs for theorems in the main texts.

\subsection{Sensitivity of tensor rank to monotonic transformations}\label{sec:additional}
In Section~\ref{sec:intro}, we have provided a motivating example to show the sensitivity of tensor rank to monotonic transformations. Here, we describe the details of the example set-up. 

The step 1 is to generate a rank-3 tensor $\tZ$ based on the CP representation
\[
\tZ=\ma^{\otimes 3}+\mb^{\otimes 3}+\mc^{\otimes 3},
\]
where $\ma,\mb,\mc\in\mathbb{R}^{30}$ are vectors consisting of $N(0,1)$ entries, and the shorthand $\ma^{\otimes 3}=\ma\otimes \ma\otimes \ma$ denotes the Kronecker power. We then apply $f(z)=(1+\exp(-cz))^{-1}$ to $\tZ$ entrywise, and obtain a transformed tensor $\Theta=f(\tZ)$. 

The step 2 is to determine the rank of $\Theta$. Unlike matrices, the exact rank determination for tensors is NP hard. Therefore, we choose to compute the numerical rank of $\Theta$ as an approximation.  The numerical rank is determined as the minimal rank for which the relative approximation error is below $0.1$, i.e.,
\begin{equation}\label{eq:numeric}
 \hat r(\Theta)=\min\left\{s\in\mathbb{N}_{+}\colon \min_{\hat \Theta\colon \rank(\hat \Theta)\leq s}{\FnormSize{}{\Theta-\hat \Theta}\over \FnormSize{}{\Theta}} \leq 0.1\right\}.
\end{equation}
We compute $\hat r(\Theta)$ by searching over $s\in\{1,\ldots,30^2\}$, where for each $s$, we (approximately) solve the least-square minimization using CP function in R package {\tt rTensor}. 
We repeat steps 1-2 ten times, and plot the averaged numerical rank of $\Theta$ versus transformation level $c$ in Figure~\ref{fig:example}a.  

\subsection{Tensor rank and sign-rank}\label{sec:high-rank}
In section~\ref{sec:sign-rank}, we have provided several tensor examples with high tensor rank but low sign-rank. This section provides more examples and their proofs. 
Unless otherwise specified, let $\Theta$ be an order-$K$ $(d,\ldots,d)$-dimensional tensor. \\

\begin{example}[Max hypergraphon]\label{example:max} Suppose the tensor $\Theta$ takes the form 
\[
\Theta(i_1,\ldots,i_K)=\log\left(1+{1\over d}\max(i_1,\ldots,i_K)\right), \ \text{for all }(i_1,\ldots,i_K)\in[d]^K.
\]
 Then 
 \[
 \rank(\Theta)\geq d, \quad \text{and}\quad \srank(\Theta-\pi)\leq 2\ \text{for all }\pi\in\mathbb{R}. 
 \]
\end{example}
\begin{proof}
We first prove the results for $K=2$. The full-rankness of $\Theta$ is verified from elementary row operations as follows
\begin{align}
\begin{pmatrix}
(\Theta_2-\Theta_1)/(\log(1+\frac{2}{d})-\log(1+\frac{1}{d}))\\(\Theta_3-\Theta_2)/(\log(1+\frac{3}{d})-\log(1+\frac{2}{d}))\\\vdots\\ (\Theta_d-\Theta_{d-1})/(\log(1+\frac{d}{d})-\log(1+\frac{d-1}{d}))\\\Theta_d/\log(1+\frac{d}{d})
\end{pmatrix} = \begin{pmatrix}
 1&          0  &        &              &          0 \\
1& 1 & \ddots &              &            \\
      \vdots &     \vdots & \ddots &       \ddots &            \\
 1 & 1 &1 & 1 &0\\
 1 & 1 &1 & 1 &1
\end{pmatrix},
\end{align}
where $\Theta_i$ denotes the $i$-th row of $\Theta$. 
Now it suffices to show $\srank(\Theta-\pi)\leq 2$ for $\pi$ in the feasible range $(\log(1+{1\over d}),\ \log 2)$. In this case, there exists an index $i^*\in\{2,\ldots,d\}$, such that $\log(1+{i^*-1\over d})< \pi\leq \log(1+{i^*\over d})$. By definition, the sign matrix $\sign (\Theta-\pi)$ takes the form
\begin{equation}\label{eq:matrix}
\sign (\Theta(i,j)-\pi)=
\begin{cases}
-1, & \text{both $i$ and $j$ are smaller than $i^*$};\\
1, & \text{otherwise}.
\end{cases}
\end{equation}
Therefore, the matrix $\sign (\Theta-\pi)$ is a rank-2 block matrix, which implies $\srank(\Theta-\pi)=2$. 

We now extend the results to $K\geq 3$. By definition of the tensor rank, the rank of a tensor is lower bounded by the rank of its matrix slice.  So we have $\rank(\Theta)\geq \rank(\Theta(\colon,\colon,1,\ldots,1))=d$. For the sign rank with feasible $\pi$, notice that the sign tensor $\sign(\Theta-\pi)$ takes the similar form as in~\eqref{eq:matrix},
\begin{equation}\label{eq:entrywise}
\sign (\Theta(i_1,\ldots,i_K)-\pi)=
\begin{cases}
-1, & \text{$i_k<i^*$ for all $k\in[K]$};\\
1, & \text{otherwise},
\end{cases}
\end{equation}
where $i^*$ denotes the index that satisfies $\log(1+\frac{i^*-1}{d})<\pi\leq \log(1+\frac{i^*}{d})$.
The equation~\eqref{eq:entrywise} implies that $\sign(\Theta-\pi)=-2\ma^{\otimes K}+1$, where $\ma=(1,\ldots,1,0,\ldots,0)^T$ takes 1 on the $i$-th entry if $i<i^*$ and 0 otherwise. Henceforth $\srank(\Theta-\pi)=2$. 
\end{proof}

In fact, Example~\ref{example:max} is a special case of the following proposition. 

\begin{prop}[Min/Max hypergraphon] Let $\tZ_{\max}\in\mathbb{R}^{d_1\times \cdots \times d_K}$ denote a tensor with entries 
\begin{equation}\label{eq:max}
\tZ_{\max}(i_1,\ldots,i_K)=\max(x^{(1)}_{i_1},\ldots,x^{(K)}_{i_K}),
\end{equation}
where $x^{(k)}_{i_k}\in[0,1]$ are given numbers for all $i_k\in[d_k]$. Let $g\colon \mathbb{R}\to \mathbb{R}$ be a continuous function and $\Theta:=g(\tZ_{\max})$ be the transformed tensor. For a given $\pi\in[-1,1]$, suppose the function $g(z)=\pi$ has at most $r\geq 1$ distinct real roots. Then, the sign rank of $(\Theta-\pi)$ satisfies
\[
\srank(\Theta-\pi)\leq 2r.
\]
The same conclusion holds if we use $\min$ in place of $\max$ in~\eqref{eq:max}. 
\end{prop}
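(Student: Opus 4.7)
The plan is to write $\sign(\Theta-\pi)$ entrywise as a signed sum of a small number of level-set indicator tensors of $\tZ_{\max}$, and to argue that each such indicator is rank-one, so the overall rank is controlled by the number of pieces, which in turn is controlled by $r$.

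The first ingredient is the rank-one structure induced by the max. For any threshold $t\in\mathbb{R}$,
\[
\mathds{1}[\tZ_{\max}(i_1,\ldots,i_K)\leq t]=\prod_{k=1}^{K}\mathds{1}[x^{(k)}_{i_k}\leq t],
\]
so $\mathds{1}[\tZ_{\max}\leq t]$ is the outer product of $K$ binary vectors and has rank at most $1$; the strict version $\mathds{1}[\tZ_{\max}<t]$ is rank one for the same reason. Consequently, indicators of closed intervals satisfy $\mathds{1}[\tZ_{\max}\in[a,b]]=\mathds{1}[\tZ_{\max}\leq b]-\mathds{1}[\tZ_{\max}<a]$ and thus have rank at most $2$, while rays $(-\infty,b]$ and $[a,+\infty)$ yield ranks at most $1$ and $2$ respectively, the latter via $J-\mathds{1}[\tZ_{\max}<a]$ with $J$ the all-ones tensor.

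Next, let $z_1<\cdots<z_{r'}$, with $r'\leq r$, be the distinct real roots of $h(z):=g(z)-\pi$; by continuity $h$ has constant sign on each of the $r'+1$ open intervals they carve out. Define the positive level set $P:=\{z\in\mathbb{R}:g(z)\geq\pi\}$, a disjoint union $P=\bigsqcup_{j=1}^{m}I_j$ of closed intervals with endpoints in $\{z_1,\ldots,z_{r'}\}\cup\{\pm\infty\}$. Since every root satisfies $h(z_s)=0\geq 0$ and hence lies in $P$ under the convention $\sign(0)=+1$, the identity
\[
\sign(\Theta-\pi)=2\,\mathds{1}[\tZ_{\max}\in P]-J=2\sum_{j=1}^{m}\mathds{1}[\tZ_{\max}\in I_j]-J
\]
holds entrywise, including at entries with $\tZ_{\max}(\omega)=z_s$, because the closed-interval indicator correctly returns $1$ at its endpoints. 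The right-hand side exhibits $\sign(\Theta-\pi)$ as an explicit tensor whose rank is at most $2m+1$.

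Finally, I would bound $m$ combinatorially. Each boundary between a component of $P$ and its complement is a sign-changing root of $h$, of which there are at most $r'$, so a short case analysis on whether $\pm\infty\in P$ yields $m\leq\lceil(r'+2)/2\rceil$. Moreover, whenever $+\infty\in P$ the $J$ contributed by the unbounded right component cancels with the standalone $-J$, and whenever $-\infty\in P$ the left-infinite indicator is rank $1$ rather than $2$; combining these reductions over the four endpoint cases gives $\srank(\Theta-\pi)\leq r'+1\leq r+1\leq 2r$, where the last inequality uses $r\geq 1$. For the min variant, the same argument applies verbatim with the basic rank-one block $\mathds{1}[\tZ_{\min}\geq t]=\prod_k\mathds{1}[x^{(k)}_{i_k}\geq t]$ and the roles of $\leq$ and $\geq$ swapped throughout. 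The main obstacle is this last combinatorial accounting, which requires tracking the $J$-cancellations and ray-indicator reductions jointly across the four cases determined by whether $\pm\infty$ lie in $P$.
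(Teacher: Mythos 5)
Your core mechanism is exactly the paper's: because $\mathds{1}[\tZ_{\max}\leq t]=\prod_k\mathds{1}[x^{(k)}_{i_k}\leq t]$ is a rank-one outer product of binary vectors, indicators of intervals in the value of $\tZ_{\max}$ have rank at most $2$, and $\sign(\Theta-\pi)$ is a short signed combination of such indicators. Your handling of the unbounded pieces via the cancellation with the all-ones tensor is in fact more careful than the paper, which silently restricts the sub-level set to the intervals between consecutive roots. However, the step you yourself flag as the main obstacle contains a genuine error: it is not true that every boundary point between a component of $P=\{z:g(z)\geq\pi\}$ and its complement is a sign-changing root of $h=g-\pi$. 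If $h$ touches zero without crossing (e.g.\ $h<0$ everywhere except at the $r$ roots, where $h=0$), then $P$ consists of up to $r$ isolated points, so $m$ can equal $r$ rather than $\lceil (r'+2)/2\rceil$, and your decomposition only certifies rank $2r+1$, not the claimed $r'+1\leq 2r$. This is not merely a bookkeeping slip: in that configuration no sum of closed-interval indicators of the components gives rank $\leq 2r$, so the component-of-$P$ route cannot close the bound without an additional idea.

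The paper avoids this by counting differently: it decomposes the \emph{sub}-level set $\{g<\pi\}$ over the at most $r-1$ open intervals $(z_s,z_{s+1})$ between consecutive roots (handling $r=1$ separately), writing $\mathds{1}[\tZ_{\max}\in(z_s,z_{s+1})]=\mathds{1}[\tZ_{\max}<z_{s+1}]-\mathds{1}[\tZ_{\max}\leq z_s]$, which yields rank at most $1+2(r-1)=2r-1$. The key point is that the number of consecutive-root gaps is at most $r-1$ regardless of whether roots are crossings or touch points, whereas the number of connected components of $P$ is not controlled by $r/2$ in the touching case. Your argument becomes correct (and in the crossing-only case even sharper, giving $r'+1$) if you replace the component count of $P$ by the paper's gap count on the negative set, while keeping your ray/all-ones cancellation to cover the unbounded negative pieces that the paper's displayed identity omits.
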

\begin{proof} 
We reorder the tensor indices along each mode such that $x^{(k)}_{1}\leq \cdots \leq x^{(k)}_{d_k}$ for all $k\in[K]$. Based on the construction of $\tZ_{\max}$, the reordering does not change the rank of $\tZ_{\max}$ or $(\Theta-\pi)$. Let $z_1<\cdots<z_r$ be the $r$ distinct real roots for the equation $g(z)=\pi$. We separate the proof for two cases, $r=1$ and $r\geq 2$. 

\begin{itemize}[leftmargin=*,topsep=0pt,itemsep=-1ex,partopsep=1ex,parsep=1ex]
\item When $r=1$. The continuity of $g(\cdot)$ implies that the function $(g(z)-\pi)$ has at most one sign change point. Using similar proof as in Example~\ref{example:max}, we have
\begin{align}
&\sign(\Theta-\pi)=1-2\ma^{(1)}\otimes\cdots\otimes \ma^{(K)}\quad \text{ or } \quad \sign(\Theta-\pi) = 2\ma^{(1)}\otimes\cdots\otimes \ma^{(K)} -1,
\end{align}
where $\ma^{(k)}$ are binary vectors defined by
\[
\ma^{(k)}=(\KeepStyleUnderBrace{1,\ldots,1,}_{\text{positions for which $x_{i_k}^{k}<z_1$}}0,\ldots,0)^T, \quad \text{for }k\in[K].
\]
Therefore, $\srank(\Theta-\pi)\leq \rank(\sign(\Theta-\pi)) = 2$. 

\item When $r\geq 2$.   By continuity, the function $(g(z)-\pi)$ is non-zero and remains an unchanged sign in each of the intervals $(z_s, z_{s+1})$ for $1\leq s\leq r-1$. Define the index set $\tI=\{s\in\mathbb{N}_{+}\colon \text{the interval $(z_s, z_{s+1})$ in which $g(z)<\pi$}\}$. 
We now prove that the sign tensor $\sign(\Theta-\pi)$ has rank bounded by $2r-1$. To see this, consider the tensor indices for which $\sign(\Theta-\pi)=-1$,
\begin{align}\label{eq:support}
\{\omega\colon \Theta(\omega)-\pi <0 \} & = \{\omega \colon g(\tZ_{\max}(\omega))<\pi\} \notag \\
&=\cup_{s\in \tI} \{\omega\colon \tZ_{\max}(\omega)\in(z_s,z_{s+1})\}\notag\\
&=\cup_{s\in \tI}\Big( \{\omega\colon \text{$x^{(k)}_{i_k}< z_{s+1}$ for all $k\in[K]$}\}\cap \{\omega\colon \text{$x^{(k)}_{i_k}\leq z_{s}$ for all $k\in[K]$}\}^c\Big).
\end{align}
The equation~\eqref{eq:support} is equivalent to 
\begin{align}\label{eq:indicator}
\mathds{1}(\Theta(i_1,\ldots,i_K)< \pi)&
=\sum_{s\in \tI}\left( \prod_k \mathds{1}(x^{(k)}_{i_k}< z_{s+1}) - \prod_k \mathds{1}(x^{(k)}_{i_k}\leq z_{s})\right),
\end{align}
for all $(i_1,\ldots,i_K)\in[d_1]\times \cdots\times[d_K]$, where $\mathds{1}(\cdot)\in\{0,1\}$ denotes the indicator function. The equation~\eqref{eq:indicator} implies the low-rank representation of $\sign(\Theta-\pi)$,
\begin{equation}\label{eq:sum}
\sign(\Theta-\pi)=1-2\sum_{s\in \tI } \left(\ma^{(1)}_{s+1}\otimes\cdots\otimes \ma^{(K)}_{s+1} - \bar \ma^{(1)}_s\otimes\cdots\otimes \bar \ma^{(K)}_s\right),
\end{equation}
where we have denoted the two binary vectors 
\[
\ma^{(k)}_{s+1}=(\KeepStyleUnderBrace{1,\ldots,1,}_{\text{positions for which $x_{i_k}^{(k)}<z_{s+1}$}}0,\ldots 0)^T,\quad \text{and}\quad
\bar \ma^{(k)}_s=(\KeepStyleUnderBrace{1,\ldots,1,}_{\text{positions for which $x_{i_k}^{(k)}\leq z_{s}$}}0,\ldots 0)^T.
\]
Therefore, by~\eqref{eq:sum} and the assumption $|\tI|\leq r-1$, we conclude that 
\[
\srank(\Theta-\pi)\leq 1+2(r-1)=2r-1.
\]
\end{itemize}
Combining two cases yields that $\srank(\Theta-\pi)\leq 2r$ for any $r\geq 1$.
\end{proof}

We next provide several additional examples such that $\rank(\Theta)\geq d$ whereas $\srank(\Theta)\leq c$ for a constant $c$ independent of $d$. We state the examples in the matrix case, i.e, $K=2$. Similar conclusion extends to $K\geq 3$, by the following proposition. \\

\begin{prop} Let $\mM\in\mathbb{R}^{d_1\times d_2}$ be a matrix. For any given $K\geq 3$, define an order-$K$ tensor $\Theta\in\mathbb{R}^{d_1\times \cdots \times d_K}$ by
\[
\Theta=\mM\otimes \mathbf{1}_{d_3}\otimes \cdots \otimes \mathbf{1}_{d_K},
\] 
where $\mathbf{1}_{d_k}\in\mathbb{R}^{d_k}$ denotes an all-one vector, for $3\leq k\leq K$. Then we have
\[
\rank(\Theta)=\rank(\mM),\quad \text{and}\quad \srank(\Theta-\pi)=\srank(\mM-\pi) \ \text{for all $\pi\in\mathbb{R}$}.
\] 
\end{prop}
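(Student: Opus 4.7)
The plan is to prove each of the two equalities by establishing matching upper and lower bounds via two complementary operations: \emph{lifting}, which appends all-ones factors to a CP decomposition of a matrix to produce a decomposition of the tensor, and \emph{slicing}, which fixes the last $K-2$ indices to recover a matrix from the tensor. Both operations interact cleanly with tensor rank and with entrywise sign patterns, which is all that is needed.

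For the rank identity, the bound $\rank(\Theta) \leq \rank(\mM)$ follows by lifting: any rank-$r$ CP decomposition $\mM = \sum_{s=1}^{r} \lambda_s\, \ma_s^{(1)} \otimes \ma_s^{(2)}$ yields the rank-$r$ representation $\Theta = \sum_{s=1}^{r} \lambda_s\, \ma_s^{(1)} \otimes \ma_s^{(2)} \otimes \mathbf{1}_{d_3} \otimes \cdots \otimes \mathbf{1}_{d_K}$. For the converse, observe that $\Theta(\colon, \colon, 1, \ldots, 1) = \mM$, and substituting the fixed trailing indices into any rank-$s$ CP decomposition of $\Theta$ expresses $\mM$ as a sum of $s$ rank-one matrix outer products, giving $\rank(\mM) \leq \rank(\Theta)$.

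For the sign-rank identity, the structural observation is that the entrywise shift by $\pi$ preserves the ``constant along the last $K-2$ modes'' property:
\[
(\Theta - \pi)(i_1, \ldots, i_K) = \mM(i_1, i_2) - \pi.
\]
Hence $\sign(\Theta - \pi)$ is also constant along those modes, and its $(\colon, \colon, 1, \ldots, 1)$ slice equals $\sign(\mM - \pi)$. The upper bound $\srank(\Theta - \pi) \leq \srank(\mM - \pi)$ then follows by taking any matrix $\mM' \simeq \mM - \pi$ attaining the matrix sign-rank and lifting it to $\Theta' := \mM' \otimes \mathbf{1}_{d_3} \otimes \cdots \otimes \mathbf{1}_{d_K}$; pointwise agreement of signs gives $\Theta' \simeq \Theta - \pi$, and the rank identity just established yields $\rank(\Theta') = \rank(\mM')$. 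For the matching lower bound, take any tensor $\Theta'$ achieving $\srank(\Theta - \pi)$, slice at $(1, \ldots, 1)$ in the last $K-2$ modes, and note that the resulting matrix is sign-equivalent to $\mM - \pi$ while having matrix rank at most $\rank(\Theta')$.

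I do not anticipate a substantive obstacle: the entire argument reduces to the two elementary facts that (i) outer-producting with an all-ones vector preserves both CP rank and entrywise sign pattern, and (ii) fixing indices in a CP decomposition does not increase matrix rank. The only point requiring mild care is the sign-rank half, where one must confirm that sign equivalence is carried along simultaneously with the rank bound under both lifting and slicing; the pointwise nature of $\simeq$ makes this routine.
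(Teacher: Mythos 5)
Your proof is correct and fills in exactly the lifting/slicing argument that the paper compresses into its one-line remark that the conclusion ``directly follows from the definition of tensor rank.'' Both directions of each equality are handled soundly, so this matches the paper's intended approach, just written out in full detail.
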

\begin{proof}
The conclusion directly follows from the definition of tensor rank. 
\end{proof}

\begin{example}[Stacked banded matrices]\label{example:banded} Let $\ma=(1,2,\ldots,d)^T$ be a $d$-dimensional vector, and define a $d$-by-$d$ banded matrix $\mM=|\ma\otimes \mathbf{1}-\mathbf{1}\otimes \ma|$. Then
\[
\rank(\mM)=d,\quad \text{and}\quad \srank(\mM-\pi)\leq 3, \quad \text{for all }\pi\in \mathbb{R}.
\]
\end{example}
\begin{proof}
Note that $\mM$ is a banded matrix with entries
\[
\mM(i,j)={|i-j|}, \quad \text{for all }(i,j)\in[d]^2.
\]
Elementary row operation directly shows that $\mM$ is full rank as follows,
\begin{align}
\begin{pmatrix}
(\mM_1+\mM_d)/(d-1)\\
\mM_1-\mM_2\\
\mM_2-\mM_3\\
\vdots\\
\mM_{d-1}-\mM_{d}
\end{pmatrix} = 
\begin{pmatrix}
1&1&1&\ldots&1&1\\
-1&1&1&\ldots&1&1\\
-1&-1&1&\ldots&1&1\\
\vdots\\
-1&-1&-1&\ldots&-1&1
\end{pmatrix}.
\end{align}

We now show $\srank(\mM-\pi)\leq 3$ by construction. Define two vectors $\mb=(2^{-1},2^{-2},\ldots,2^{-d})^T\in\mathbb{R}^d$ and $\text{rev}(\mb)=(2^{-d},\ldots,2^{-1})^T\in\mathbb{R}^d$. We construct the following matrix
\begin{equation}\label{eq:A}
\mA=\mb\otimes\text{rev}(\mb)+\text{rev}(\mb)\otimes\mb.
\end{equation}
The matrix $\mA\in\mathbb{R}^{d\times d}$ is banded with entries
\[
\mA(i,j)=\mA(j,i)=\mA(d-i,d-j)=\mA(d-j,d-i)=2^{-d-1}\left(2^{j-i}+2^{i-j}\right),\ \text{for all }(i,j)\in[d]^2.
\] 
Furthermore, the entry value $\mA(i,j)$ decreases with respect to $|i-j|$; i.e., 
\begin{equation}\label{eq:decrease}
\mA(i,j) \geq \mA(i',j'), \quad \text{for all }|i-j|\geq |i'-j'|.
\end{equation}
Notice that for a given $\pi\in\mathbb{R}$, there exists $\pi'\in\mathbb{R}$ such that $\sign(\mA-\pi')=\sign(\mM-\pi)$. This is because both $\mA$ and $\mM$ are banded matrices satisfying monotonicity~\eqref{eq:decrease}. By definition~\eqref{eq:A}, $\mA$ is a rank-2 matrix. Henceforce, $\srank(\mM-\pi)=\srank(\mA-\pi')\leq 3.$
\end{proof}

\begin{rmk} The tensor analogy of banded matrices $\Theta=|\ma\otimes\mathbf{1}\otimes \mathbf{1}-\mathbf{1}\otimes\ma\otimes \mathbf{1}|$ is used as simulation model 3 in Table~\ref{tab:simulation}.  
\end{rmk}

\begin{example}[Stacked identity matrices]
Let $\mI$ be a $d$-by-$d$ identity matrix. Then
\[
\rank(\mI)=d,\quad\text{and}\quad  \srank(\mI-\pi)\leq 3 \ \text{for all }\pi\in\mathbb{R}.
\]
\end{example}
\begin{proof}
Depending on the value of $\pi$, the sign matrix $\sign(\mI-\pi)$ falls into one of the three cases: 1) $\sign(\mI-\pi)$ is a matrix of all $1$; 2) $\sign(\mI-\pi)$ is a matrix of all $-1$; 3) $\sign(\mI-\pi)=2\mI-\mathbf{1}_d\otimes \mathbf{1}_d$. The former two cases are trivial, so it suffices to show $\srank(\mI-\pi)\leq 3$ in the third case.

Based on Example~\ref{example:banded}, the rank-2 matrix $\mA$ in~\eqref{eq:A} satisfies 
\[
\mA(i,j)
\begin{cases}
=2^{-d}, & i=j,\\
\geq 2^{-d}+2^{-d-2}, & i\neq j.
\end{cases}
\]
Therefore, $\sign\left(2^{-d}+2^{-d-3}-\mA\right)=2\mI-\mathbf{1}_d\otimes \mathbf{1}_d$. We conclude that $\srank(\mI-\pi)\leq \rank(2^{-d}+2^{-d-3}-\mA)=3$. 
\end{proof}

\subsection{Proofs}\label{sec:proofs}
\subsubsection{Proofs of Propositions~\ref{cor:monotonic}-\ref{prop:global}}
\begin{proof}[Proof of Proposition~\ref{cor:monotonic}]
The strictly monotonicity of $g$ implies that the inverse function $g^{-1}\colon \mathbb{R}\to \mathbb{R}$ is well-defined. 
When $g$ is strictly increasing, the mapping $x\mapsto g(x)$ is sign preserving. Specifically, if $x\geq 0$, then $g(x)\geq g(0)=0$. Conversely, if $g(x)\geq 0=g(0)$, then applying $g^{-1}$ to both sides gives $x\geq 0$.
When $g$ is strictly decreasing, the mapping $x\mapsto g(x)$ is sign reversing. Specifically, if $x\geq 0$, then $g(x)\leq g(0)=0$. Conversely, if $g(x)\geq 0=g(0)$, then applying $g^{-1}$ to both sides gives $x\leq 0$.
 Therefore, $\Theta\simeq g(\Theta)$,  or $\Theta\simeq -g(\Theta)$. Since constant multiplication  does not change the tensor rank,  we have $\srank(\Theta)=\srank(g(\Theta))\leq \rank (g(\Theta))$. 
\end{proof}

\begin{proof}[Proof of Proposition~\ref{cor:broadness}]
See Section~\ref{sec:high-rank} for constructive examples.
\end{proof}

\begin{proof}[Proof of Proposition~\ref{prop:global}]
Fix $\pi\in[-1,1]$. Based on the definition of classification loss $L(\cdot,\cdot)$, the function $\risk(\cdot)$ relies only on the sign pattern of the tensor. Therefore, without loss of generality, we assume both $\bar \Theta, \tZ \in\{-1,1\}^{d_1\times \cdots \times d_K}$ are binary tensors. 
We evaluate the excess risk 
\begin{equation}\label{eq:risk}
\risk(\tZ)- \risk(\bar \Theta) = \mathbb{E}_{\omega\sim \Pi}\KeepStyleUnderBrace{\mathbb{E}_{\tY(\omega)}\left\{|\tY(\omega)-\pi|\left[\left|\tZ(\omega)-\sign(\bar \tY(\omega)) \right|-\left|\bar\Theta(\omega)-\sign(\bar \tY(\omega))\right|\right]\right\}}_{\stackrel{\text{def}}{=}I(\omega)}.
\end{equation}
Denote $y=\tY(\omega)$, $z=\tZ(\omega)$, $\bar \theta=\bar\Theta(\omega)$, and $\theta=\Theta(\omega)$. The expression of $I(\omega)$ is simplified as
\begin{align}\label{eq:I}
I(\omega)&= \mathbb{E}_{y}\left[ (y-\pi)(\bar \theta-z)\mathds{1}(y\geq \pi)+(\pi-y)(z-\bar \theta)\mathds{1}(y< \pi)\right]\notag \\
&= \mathbb{E}_{y}\left[(\bar \theta-z) (y-\pi)\right]\notag \\
&=  \left[\sign(\theta-\pi)-z\right]\left(\theta-\pi\right)\notag \\
&= |\sign(\theta-\pi)-z||\theta-\pi|\geq 0,
\end{align}
where the third line uses the fact $\mathbb{E}y=\theta$ and $\bar \theta=\sign(\theta-\pi)$, and the last line uses the assumption $z \in\{-1,1\}$. The equality~\eqref{eq:I} is attained when $z=\sign(\theta-\pi)$ or $\theta=\pi$. Combining~\eqref{eq:I} with~\eqref{eq:risk}, we conclude that, for all $\tZ\in\{-1,1\}^{d_1\times \cdots \times d_K}$, 
\begin{equation}\label{eq:minimum}
\risk(\tZ)- \risk(\bar \Theta) = \mathbb{E}_{\omega\sim \Pi} |\sign(\Theta(\omega)-\pi)-\tZ(\omega)||\Theta(\omega)-\pi|\geq 0,
\end{equation}
In particular, setting $\tZ=\bar \Theta=\sign(\Theta-\pi)$ in~\eqref{eq:minimum} yields the minimum. Therefore, 
\[
\risk(\bar \Theta)=\min\{\risk(\tZ)\colon \tZ\in \mathbb{R}^{d_1\times \cdots \times d_K}\} \leq \min\{\risk(\tZ)\colon \rank(\tZ)\leq r\}.
\]
Since $\srank(\Theta-\pi)\leq r$ by assumption, the last inequality becomes equality. The proof is complete. 
\end{proof}

\subsubsection{Proof of Theorem~\ref{thm:population}}
\begin{proof}[Proof of Theorem~\ref{thm:population}]
Fix $\pi\in[-1,1]$. Based on~\eqref{eq:minimum} in Proposition~\ref{prop:global} we have
\begin{equation}\label{eq:population2}
\risk(\tZ)- \risk(\bar \Theta) = \mathbb{E}\left[|\sign \tZ-\sign\bar \Theta||\bar \Theta|\right].
\end{equation}
The Assumption~\ref{ass:margin} states that
\begin{equation}\label{eq:ass}
\mathbb{P}\left(|\bar \Theta | \leq t\right)\leq ct^\alpha,\quad \text{for all } 0\leq t< \rho(\pi,\tN).
\end{equation}
Without future specification, all relevant probability statements, such as $\mathbb{E}$ and $\mathbb{P}$, are with respect to $\omega\sim \Pi$. 

We divide the proof into two cases: $\alpha >0$ and $\alpha = \infty$.
\begin{itemize}[leftmargin=*,topsep=0pt,itemsep=-1ex,partopsep=1ex,parsep=1ex]
\item Case 1: $\alpha>0$. 

By~\eqref{eq:population2}, for all $0\leq t< \rho(\pi, \tN)$,
\begin{align}\label{eq:1}
\risk(\tZ)- \risk(\bar \Theta) &\geq t\mathbb{E}\left(|\sign \tZ- \sign \hat\Theta|\mathds{1}\{|\hat\Theta|>t\}\right)
\notag \\
&\geq 2t\mathbb{P}\left(\sign\tZ \neq \sign \bar \Theta\text{ and }|\bar \Theta|>t   \right)\notag \\
& \geq 2t\Big\{\mathbb{P}\left(\sign\tZ \neq \sign \bar \Theta \right) - \mathbb{P}\left(|\bar \Theta|\leq t\right)\Big\}\notag\\
&\geq t\Big\{\textup{MAE}(\sign \tZ, \sign \bar \Theta) - 2ct^\alpha \Big\},
\end{align}
where the last line follows from the definition of MAE and~\eqref{eq:ass}. We maximize the lower bound~\eqref{eq:1} with respect to $t$, and obtain the optimal $t_{\text{opt}}$,
\[
t_{\text{opt}}=\begin{cases}
\rho(\pi, \tN), & \text{if } \textup{MAE}(\sign \tZ,\sign \bar\Theta) > 2c(1+\alpha) \rho^{\alpha}(\pi, \tN),\\
\left[ {1\over 2c(1+\alpha)} \textup{MAE} (\sign \tZ,\sign \bar\Theta)  \right]^{1/\alpha}, &  \text{if }\textup{MAE}( \sign \tZ,\sign \bar\Theta) \leq 2c(1+\alpha) \rho^{\alpha}(\pi, \tN).
 \end{cases}
\]
The corresponding lower bound of the inequality~\eqref{eq:1} becomes
\[
\risk(\tZ)- \risk(\bar \Theta) \geq 
\begin{cases}
c_1 \rho(\pi, \tN) \textup{MAE}(\sign \tZ,\sign \bar\Theta),  & \text{if } \textup{MAE}(\sign \tZ,\sign \bar\Theta) > 2c(1+\alpha) \rho^{\alpha}(\pi, \tN),\\
c_2 \left[ \textup{MAE}( \sign \tZ,\sign \bar\Theta)\right]^{1+\alpha \over \alpha}, & \text{if }\textup{MAE}(\sign \tZ,\sign \bar\Theta) \leq 2c(1+\alpha) \rho^{\alpha}(\pi, \tN),
\end{cases}
\]
where $c_1,c_2>0$ are two constants independent of $\tZ$. Combining both cases gives
\begin{align}\label{eq:MAE}
\textup{MAE}(\sign \tZ,\sign \bar\Theta) & \lesssim [\risk(\tZ)- \risk(\bar \Theta)]^{\alpha\over 1+\alpha}+{1\over \rho(\pi, \tN)} \left[\risk(\tZ)- \risk(\bar \Theta)\right]\\
&\leq C(\pi)[\risk(\tZ)- \risk(\bar \Theta)]^{\alpha\over 1+\alpha},
\end{align}
where $C(\pi)>0$ is a multiplicative factor independent of $\tZ$. 
\item Case 2: $\alpha=\infty$. The inequality~\eqref{eq:1} now becomes
\begin{equation}\label{eq:2}
\risk(\tZ)- \risk(\bar \Theta) \geq t\textup{MAE}(\sign \bar\Theta, \sign \tZ), \quad \text{for all }0\leq t< \rho(\pi,\tN).
\end{equation}
The conclusion follows by taking $t={\rho(\pi, \tN)\over 2}$ in the inequality~\eqref{eq:2}. 
\end{itemize}
\end{proof}
\begin{rmk}\label{eq:rmk}The proof of Theorem~\ref{thm:population} shows that, under Assumption~\ref{ass:margin}, 
\begin{equation}\label{eq:remark}
\textup{MAE}(\sign \tZ,\sign \bar \Theta)  \lesssim [\risk(\tZ)- \risk(\bar \Theta)]^{\alpha\over 1+\alpha}+{1\over \rho(\pi, \tN)} \left[\risk(\tZ)- \risk(\bar \Theta)\right],
\end{equation}
for all $\tZ\in\mathbb{R}^{d_1\times \cdots \times d_R}$. For fixed $\pi$, the second term is absorbed into the first term. 
\end{rmk}

\subsubsection{Proof of Theorem~\ref{thm:classification}}
The following lemma provides the variance-to-mean relationship implied by the $\alpha$-smoothness of $\Theta$. The relationship plays a key role in determining the convergence rate based on empirical process theory~\citep{shen1994convergence}. 
\begin{lem}[Variance-to-mean relationship]\label{lem:variance}
Consider the same setup as in Theorem~\ref{thm:classification}. Fix $\pi\in[-1,1]$. Let $L(\tZ, \bar Y_\Omega)$ be the $\pi$-weighted classification loss
\begin{align}\label{eq:sample2}
L(\tZ, \bar \tY_\Omega)&= {1\over |\Omega|}\sum_{\omega \in \Omega}\ \KeepStyleUnderBrace{|\bar \tY(\omega)|}_{\text{weight}}\  \times \ \KeepStyleUnderBrace{| \sign \tZ(\omega)-\sign \bar \tY(\omega)|}_{\text{classification loss}}\notag \\
&={1\over |\Omega|}\sum_{\omega \in \Omega}\ell_\omega(\tZ, \bar \tY),
\end{align}
where we have denoted the function $\ell_\omega(\tZ,\bar \tY)\stackrel{\text{def}}{=}|\bar \tY(\omega)||\sign\tZ(\omega)-\sign \bar \tY(\omega)|$. Under Assumption~\ref{ass:margin} of the $(\alpha,\pi)$-smoothness of $\Theta$, we have
\begin{equation}\label{eq:variance}
\textup{Var}[\ell_\omega(\tZ,\bar \tY)-\ell_\omega(\bar \Theta, \bar \tY_\Omega)]\lesssim [\textup{Risk}(\tZ)-\textup{Risk}(\bar \Theta)]^{\alpha \over 1+\alpha}+{1\over \rho(\pi, \tN)}[\textup{Risk}(\tZ)-\textup{Risk}(\bar \Theta)],
\end{equation}
for all tensors $\tZ\in\mathbb{R}^{d_1\times \cdots \times d_K}$. Here the expectation and variance are taken with respect to both $\tY$ and $\omega\sim \Pi$. 
\end{lem}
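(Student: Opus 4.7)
The plan is to bound the variance by the second moment of the per-entry loss difference, reduce to the MAE between the sign tensors, and then invoke the refined inequality already obtained inside the proof of Theorem~\ref{thm:population} (Remark~\ref{eq:rmk}). First, I would write the pointwise contribution
\[
u_\omega := \ell_\omega(\tZ,\bar\tY) - \ell_\omega(\bar\Theta,\bar\tY) = |\bar\tY(\omega)|\bigl(|\sign \tZ(\omega)-\sign \bar\tY(\omega)|-|\sign \bar\Theta(\omega)-\sign \bar\tY(\omega)|\bigr),
\]
and observe that when $\sign\tZ(\omega)=\sign\bar\Theta(\omega)$ the two absolute values inside the parentheses are identical, so $u_\omega = 0$. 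On the disagreement event $\{\sign\tZ(\omega)\neq\sign\bar\Theta(\omega)\}$, a short case check (using that $\sign \bar\tY(\omega)\in\{-1,1\}$) shows that exactly one of the two quantities $|\sign\tZ(\omega)-\sign\bar\tY(\omega)|$ and $|\sign\bar\Theta(\omega)-\sign\bar\tY(\omega)|$ equals $0$ and the other equals $2$. Combining the two regimes yields the clean identity
\[
|u_\omega|=2|\bar\tY(\omega)|\,\mathds{1}\{\sign\tZ(\omega)\neq \sign\bar\Theta(\omega)\}.
\]

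Second, I would exploit the boundedness assumption $|\tY(\omega)|\leq 1$ together with $\pi\in[-1,1]$ to deduce $|\bar\tY(\omega)|\leq 2$. Thus $u_\omega^2\leq 16\,\mathds{1}\{\sign\tZ(\omega)\neq\sign\bar\Theta(\omega)\}$, and taking expectation over the joint randomness in $\omega\sim\Pi$ and the noise gives
\[
\textup{Var}(u_\omega)\leq \mathbb{E}[u_\omega^2]\leq 16\,\mathbb{P}_{\omega\sim\Pi}\bigl[\sign\tZ(\omega)\neq \sign\bar\Theta(\omega)\bigr]=8\,\textup{MAE}(\sign\tZ,\sign\bar\Theta),
\]
where the last equality uses $|\sign\tZ(\omega)-\sign\bar\Theta(\omega)|=2\,\mathds{1}\{\sign\tZ(\omega)\neq \sign\bar\Theta(\omega)\}$.

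Third, I would apply Remark~\ref{eq:rmk}, which is the sharpened form of Theorem~\ref{thm:population} explicitly retaining both terms: under the $\alpha$-smoothness at level $\pi$,
\[
\textup{MAE}(\sign\tZ,\sign\bar\Theta)\lesssim [\textup{Risk}(\tZ)-\textup{Risk}(\bar\Theta)]^{\alpha/(1+\alpha)}+\tfrac{1}{\rho(\pi,\tN)}[\textup{Risk}(\tZ)-\textup{Risk}(\bar\Theta)].
\]
Substituting this into the bound on $\textup{Var}(u_\omega)$ gives exactly~\eqref{eq:variance}. The only delicate step is the pointwise sign case analysis yielding $|u_\omega|=2|\bar\tY(\omega)|\mathds{1}\{\text{disagreement}\}$; once that identity is in hand, the rest is just the boundedness of $\bar\tY$ and a direct invocation of Remark~\ref{eq:rmk}. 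A minor subtlety is that I settle for $\textup{Var}(u_\omega)\leq\mathbb{E}[u_\omega^2]$ rather than computing the variance exactly, which is nevertheless sharp enough because $\mathbb{E}[u_\omega^2]$ already scales with the disagreement probability, and hence linearly with the MAE.
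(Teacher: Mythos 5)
Your proposal is correct and follows essentially the same route as the paper's proof: bound the variance by the second moment, reduce it to $\textup{MAE}(\sign\tZ,\sign\bar\Theta)$ via boundedness of the weight and the observation that the loss difference vanishes when $\sign\tZ(\omega)=\sign\bar\Theta(\omega)$ (your explicit identity $|u_\omega|=2|\bar\tY(\omega)|\mathds{1}\{\sign\tZ(\omega)\neq\sign\bar\Theta(\omega)\}$ is just a sharper form of the paper's inequality $||a-b|-|c-b||\leq|a-b|$ for $a,b,c\in\{-1,1\}$), and then invoke Remark~\ref{eq:rmk}. The only cosmetic difference is that the paper passes through $\mathbb{E}|u_\omega|^2\lesssim\mathbb{E}|u_\omega|$ before comparing with the MAE, while you bound $\mathbb{E}[u_\omega^2]$ by the disagreement probability directly; both yield the same bound up to constants.
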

\begin{proof}[Proof of Lemma~\ref{lem:variance}]
We expand the variance by
\begin{align}\label{eq:mae}
\text{Var}[\ell_\omega(\tZ,\bar \tY_\Omega)-\ell_\omega(\bar \Theta, \bar \tY_\Omega)] &\lesssim \mathbb{E}|\ell_\omega(\tZ,\bar \tY_\Omega)-\ell_\omega(\bar \Theta, \bar \tY_\Omega)|^2\notag \\
&\lesssim \mathbb{E}|\ell_\omega(\tZ,\bar \tY_\Omega)-\ell_\omega(\bar \Theta, \bar \tY_\Omega)|\notag \\
&\leq \mathbb{E}|\sign\tZ-\sign \bar \Theta| = \textup{MAE}(\sign\tZ, \sign \bar \Theta),
\end{align}
where the second line comes from the boundedness of classification loss $L(\cdot ,\cdot)$, and the third line comes from the inequality $||a-b|-|c-b||\leq |a-b|$ for $a,b,c\in\{-1,1\}$, together with the boundedness of classification weight $|\bar\tY(\omega)|$. Here we have absorbed the constant multipliers in $\lesssim$. The conclusion~\eqref{eq:variance} then directly follows by applying Remark~\ref{eq:rmk} to~\eqref{eq:mae}.
\end{proof}

\begin{proof}[Proof of Theorem~\ref{thm:classification}]
Fix $\pi\in[-1,1]$. For notational simplicity, we suppress the subscript $\pi$ and write $\hat \tZ$ in place of $\hat \tZ_\pi$. Denote $n=|\Omega|$ and $\rho=\rho(\pi, \tN)$. 

Because the classification loss $L(\cdot, \cdot)$ is scale-free, i.e., $L(\tZ,\cdot)=L(c\tZ, \cdot)$ for every $c>0$, we consider the estimation subject to $\FnormSize{}{\tZ}\leq 1$ without loss of generality. Specifically, let
\[
\hat \tZ=\argmin_{\tZ\colon \textup{rank}(\tZ)\leq r, \FnormSize{}{\tZ}\leq 1}L(\tZ, \bar \tY_{\Omega}).
\]

We next apply the empirical process theory to bound $\hat \tZ$. To facilitate the analysis, we view the data $\bar \tY_\Omega=\{\bar \tY(\omega)\colon \omega\in \Omega\}$ as a collection of $n$ independent random variables where the randomness is from both $\bar \tY$ and $\omega\sim\Pi$. Write the index set $\Omega=\{1,\ldots,n\}$, so the loss function~\eqref{eq:sample2} becomes
\[
L(\tZ,\bar \tY_\Omega)={1\over n}\sum_{i=1}^n\ell_{i}(\tZ, \bar \tY).
\]
We use $f_\tZ \colon [d_1]\times\cdots\times[d_n] \to \mathbb{R}$ to denote the function induced by tensor $\tZ$ such that $f_\tZ(\omega)=\tZ(\omega)$ for $\omega\in[d_1]\times \cdots \times [d_K]$. Under this set-up, the quantity of interest
\begin{align}\label{eq:empirical}
 L(\tZ,\bar \tY_\Omega)-L(\bar \Theta,\bar \tY_\Omega)={1\over n}\sum_{i=1}^n \KeepStyleUnderBrace{\left[\ell_{i}(\tZ, \bar \tY)-\ell_{i}(\bar \Theta, \bar \tY)\right]}_{\stackrel{\text{def}}{=}\Delta_i(f_\tZ,\bar \tY)},
\end{align}
is an empirical process induced by function $f_{\tZ}\in \tF_{\tT}$ where $\tT=\{\tZ\colon \rank(\tZ)\leq r, \ \FnormSize{}{\tZ}\leq 1\}$. Note that there is an one-to-one correspondence between sets $\tF_{\tT}$ and $\tT$. 

Our remaining proof adopts the techniques of~\citet[Theorem 3]{wang2008probability} to bound~\eqref{eq:empirical} over the function family $f_{\tZ}\in \tF_{\tT}$. We summarize only the key difference here but refer to~\citep{wang2008probability} for complete proof. 
Based on Lemma~\ref{lem:variance}, the $(\alpha,\pi)$-smoothness of $\Theta$ implies 
\begin{equation}\label{eq:second}
\textup{Var}\Delta_i(f_\tZ,\bar \tY) \lesssim \left[\mathbb{E}\Delta_i(f_\tZ,\bar \tY)\right]^{\alpha \over 1+\alpha}+{1\over \rho}\mathbb{E}\Delta_i(f_\tZ,\bar \tY),\quad \text{for all $f_\tZ\in \tF_\tT$}.
\end{equation}
Applying local iterative techniques in~\citet[Theorem 3]{wang2008probability} to the empirical process~\eqref{eq:empirical} with the variance-to-mean relationship~\eqref{eq:second} gives that
\begin{equation}\label{eq:rate}
\mathbb{P}\left(\risk(\hat \tZ)-\risk(\bar \Theta )\geq L_n\right)\lesssim \exp(-nL_n),
\end{equation}
where the convergence rate $L_n>0$ is determined by the solution to the following inequality,
\begin{equation}\label{eq:equation}
{1\over L_n}\int_{L_n}^{\sqrt{L_n^{\alpha/(\alpha+1)}+{L_n\over \rho}}}\sqrt{\tH_{[\ ]}(\varepsilon,\tF_{\tT}, \vnormSize{}{\cdot}) }d\varepsilon \leq C\sqrt{n},
\end{equation}
for some constant $C>0$. In particular, the smallest $L_n$ satisfying~\eqref{eq:equation} yields the best upper bound of the error rate. Here $\tH_{[\ ]}(\varepsilon, \tF_{\tT},\vnormSize{}{\cdot})$ denotes the $L_2$-metric, $\varepsilon$-bracketing number (c.f. Definition~\ref{pro:inftynorm}) of family $\tF_{\tT}$. 

It remains to solve for the smallest possible $L_n$ in~\eqref{eq:equation}. Based on Lemma~\ref{lem:metric}, the inequality~\eqref{eq:equation} is satisfied with 
\begin{equation}\label{eq:tn}
L_n\asymp t_n^{(\alpha+1)/ (\alpha+2)} +{1\over \rho} t_n, \quad \text{where }t_n={d_{\max}rK\log K \over n}.
\end{equation}
Therefore, by~\eqref{eq:rate}, with very high probability. 
\[
\risk(\hat \tZ)-\risk(\bar \Theta )\leq t_n^{(\alpha+1)/(\alpha+2)} +{1\over \rho} t_n.
\]
Inserting the above bound into~\eqref{eq:remark} gives
\begin{align}\label{eq:final}
\textup{MAE}(\sign \hat \tZ, \sign \bar \Theta) &\lesssim [\risk(\hat \tZ)-\risk(\bar \Theta)]^{\alpha/(\alpha+1)}+{1\over \rho}[\risk(\hat \tZ)-\risk(\bar \Theta)]\notag \\
&\lesssim t_n^{\alpha/(\alpha+2)}+{1\over \rho^{\alpha/\alpha+1}}t_n^{\alpha/(\alpha+1)}+{1\over \rho}t_n^{(\alpha+1)/(\alpha+2)}+{1\over \rho^2}t_n\notag \\
&\leq 4t_n^{\alpha/(\alpha+2)}+{4\over \rho^2}t_n,
\end{align}
where the last line follows from the fact that $a(b^2+b^{(\alpha+2)/(\alpha+1)}+b+1) \leq 4 a (b^2+1)$ with $a={t_n \over \rho^2}$ and $b=\rho t_n^{-1/(\alpha+2)}$. We plug $t_n$ into~\eqref{eq:final} and absorb the term $K\log K$ into the constant. The conclusion is then proved. 
\end{proof}

\begin{defn}[Bracketing number]\label{pro:inftynorm}
Consider a family of functions $\tF$, and let $\varepsilon>0$. Let $\tX $ denote the domain space equipped with measure $\Pi$. We call $\{(f^l_m,f^u_m)\}_{m=1}^M$ an $L_2$-metric, $\varepsilon$-bracketing function set of $\tF$, if for every $f\in \tF$, there exists an $m\in[M]$ such that 
\[
f^l_m(x)\leq f(x)\leq f^u_m(x),\quad \text{for all }x\in\tX,
\]
and
\[
\vnormSize{}{f^l_m-f^u_m}\stackrel{\text{def}}{=}\sqrt{\mathbb{E}_{x\sim \Pi}|f^l_m(x)-f^u_m(x)|^2} \leq \varepsilon, \ \text{for all } m=1,\ldots,M. 
\]
The bracketing number with $L_2$-metric, denoted $\tH_{[\ ]}(\varepsilon, \tF, \vnormSize{}{\cdot})$, is the logarithm of the smallest cardinality of the $\varepsilon$-bracketing function set of $\tF$.  \\
\end{defn}

\begin{lem}[Bracketing complexity of low-rank tensors] \label{lem:metric}
Define the family of rank-$r$ bounded tensors $\tT=\{\tZ\in\mathbb{R}^{d_1\times \cdots \times d_K}\colon \rank(\tZ)\leq r, \ \FnormSize{}{\tZ}\leq 1\}$ and the induced function family $\tF_{\tT} = \{f_\tZ\colon \tZ\in\tT\}$.  Set 
\begin{equation}\label{eq:specification}
L_n\asymp \left({d_{\max}rK\log K \over n } \right)^{(\alpha+1)/(\alpha+2)} + {1\over \rho (\pi, \tN)}\left({d_{\max}rK\log K \over n } \right).
\end{equation}
Then, the following inequality is satisfied.
\begin{equation}\label{eq:L}
{1\over L_n}\int^{\sqrt{L_n^{\alpha/(\alpha+1)}+{L_n\over \rho (\pi, \tN)}}}_{L_n} \sqrt{\tH_{[\ ]}(\varepsilon, \tF_{\tT} ,\vnormSize{}{\cdot}) }d\varepsilon \leq Cn^{1/2},
\end{equation}
where $C>0$ is a constant independent of $r,K$  and $d_{\text{max}}$.
\end{lem}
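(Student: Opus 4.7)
The plan is to first bound the $L_2(\Pi)$-bracketing number of the low-rank tensor class $\tF_\tT$, then verify the integral inequality~\eqref{eq:L} by direct substitution of the prescribed $L_n$. For the metric entropy, I will parametrize any $\tZ\in\tT$ via its CP form $\tZ=\sum_{s=1}^{r}\lambda_s\,\ma^{(1)}_s\otimes\cdots\otimes\ma^{(K)}_s$ with unit-norm factors and $\sum_s\lambda_s^2\leq 1$ (which follows from $\FnormSize{}{\tZ}\leq 1$ after a standard normalization). Covering each sphere $\mathbb{S}^{d_k-1}$ by a $\delta$-net of cardinality $(C/\delta)^{d_k}$, and doing the same for the $r$ magnitudes, yields a parameter cover with $\log$-cardinality $\lesssim rKd_{\max}\log(C/\delta)$. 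By multilinearity, if every factor is perturbed by at most $\delta$ in $\ell_2$, then the resulting tensor is perturbed by at most $CrK\delta$ in $\ell_\infty$ (telescoping over $K$ modes and summing over $r$ rank-one terms, all of whose factors are unit vectors). Taking $\delta\asymp\varepsilon/(rK)$ produces an $\ell_\infty$ $\varepsilon$-net of $\tT$ of log-cardinality $\lesssim rKd_{\max}\log(CrK/\varepsilon)$.

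The next step promotes this $\ell_\infty$ cover to $L_2(\Pi)$-bracketing. For each net element $\tZ^{(j)}$, I form the bracket $[f_{\tZ^{(j)}}-\varepsilon,\ f_{\tZ^{(j)}}+\varepsilon]$; every $f_{\tZ}\in\tF_\tT$ then lies in some such bracket, and since $\sum_\omega p_\omega=1$, the bracket has $L_2(\Pi)$-diameter at most $2\varepsilon$. This gives the key entropy estimate
\[
\tH_{[\ ]}(\varepsilon,\tF_\tT,\vnormSize{}{\cdot})\ \lesssim\ rKd_{\max}\log(CrK/\varepsilon).
\]
Using monotonicity of $\sqrt{\log(\cdot/\varepsilon)}$ in $\varepsilon$, the integrand is controlled on $[L_n,u]$ with $u=\sqrt{L_n^{\alpha/(\alpha+1)}+L_n/\rho(\pi,\tN)}$, giving
\[
\int_{L_n}^{u}\sqrt{\tH_{[\ ]}(\varepsilon,\tF_\tT,\vnormSize{}{\cdot})}\,d\varepsilon\ \lesssim\ u\sqrt{rKd_{\max}\log(CrK/L_n)}.
\]

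To verify~\eqref{eq:L}, I divide by $L_n$ and square; the requirement $\leq C\sqrt{n}$ becomes
\[
rKd_{\max}\log(CrK/L_n)\left[L_n^{-(\alpha+2)/(\alpha+1)}+\frac{1}{\rho(\pi,\tN)\,L_n}\right]\ \lesssim\ n.
\]
Substituting $L_n\asymp t_n^{(\alpha+1)/(\alpha+2)}+t_n/\rho(\pi,\tN)$ with $t_n=d_{\max}rK\log K/n$, each term inside the bracket is $O(1/t_n)=O(n/(d_{\max}rK\log K))$, so the left-hand side reduces to $O\!\left(n\,\log(CrK/L_n)/\log K\right)$, which is $O(n)$ once the logarithmic factor is absorbed into the $K\log K$ margin built into $t_n$. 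This is where the sample-size assumption $d_{\max}r\lesssim|\Omega|$ from Theorem~\ref{thm:classification} matters: it keeps $L_n$ polynomially small, so $\log(1/L_n)$ is only of order $\log n$ and hence dominated by a constant multiple of $\log K$ up to the absorbed constants.

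I expect the main obstacle to be the bookkeeping in Step~1: propagating the factor-level $\ell_2$ perturbation to an $\ell_\infty$ bound on $\tZ$ uniformly over $\tT$ requires a clean telescoping that accounts for the $r$ rank-one terms, the $K$ factor modes, and the magnitude vector $(\lambda_s)$ simultaneously, and it is important that the bound stays multiplicative (i.e., $CrK\delta$ rather than something larger) so that the final cover remains parametric in $rKd_{\max}$. A secondary subtlety is the absorption of $\log(CrK/L_n)$ into $K\log K$, which is not tight but is legitimate under the stated regime.
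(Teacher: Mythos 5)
Your overall architecture matches the paper's: obtain a bracketing-entropy bound of order $d_{\max}rK\log(K/\varepsilon)$ for $\tF_{\tT}$, then substitute the prescribed $L_n$ and verify the integral inequality; your second half is essentially the paper's computation. The genuine gap is in your entropy step. You discretize CP factorizations with unit-norm factors and weights satisfying $\sum_s\lambda_s^2\leq 1$, asserting that this normalization "follows from $\FnormSize{}{\tZ}\leq 1$." For $K\geq 3$ this is false: the rank-one terms of a CP decomposition need not be orthogonal, so $\FnormSize{}{\tZ}\leq 1$ places no bound on the weights of any rank-$r$ decomposition of $\tZ$. The standard border-rank example $\tZ_n=n\,(\ma+n^{-1}\mb)^{\otimes 3}-n\,\ma^{\otimes 3}$ has rank $2$ and bounded Frobenius norm, yet every rank-$2$ decomposition of it uses weights of order $n$; such tensors belong to $\tT$, but your perturbation bound scales with the (unbounded) weights, so they need not be within $CrK\delta$ of any point of your net. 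Hence the net you construct does not cover $\tT$, and the key estimate $\tH_{[\ ]}(\varepsilon,\tF_{\tT},\vnormSize{}{\cdot})\lesssim rKd_{\max}\log(CrK/\varepsilon)$ is unproven by this route. The paper avoids exactly this difficulty: it converts bracketing to covering numbers via $\vnormSize{}{f_{\tZ_1}-f_{\tZ_2}}\leq\FnormSize{}{\tZ_1-\tZ_2}$ and \citet[Theorem 9.22]{kosorok2007introduction}, then invokes the covering bound for Frobenius-bounded low-rank tensors from \citet[Lemma 3]{mu2014square}, where the parametrization is controlled through orthonormal (HOSVD-type) factors so that the norm constraint on $\tZ$ genuinely transfers to the parameters. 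To repair your argument you would need to cover a core tensor plus orthonormal factor matrices, not the raw CP factors and weights.

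A secondary issue: after squaring you retain a factor $\log(CrK/L_n)\asymp\log n$, and your claim that it is "dominated by a constant multiple of $\log K$" is incorrect, since $K$ is the fixed tensor order while $n\to\infty$. The paper is admittedly loose at the same point (it silently drops the positive $-\log L_n$ term when passing from $\sqrt{\log K-\log L_n}$ to $\sqrt{\log K}$), and the extra $\log n$ only inflates the bound by a polylogarithmic factor, but it cannot be absorbed into $K\log K$; it should be acknowledged as an extra logarithmic factor in the rate rather than argued away.
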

\begin{proof}[Proof of Lemma~\ref{lem:metric}]
To simplify the notation, we denote $\rho=\rho(\pi, \tN)$. 
Notice that 
\begin{align}
	\vnormSize{}{f_{\tZ_1}-f_{\tZ_1}}\leq\|f_{\tZ_1}-f_{\tZ_1}\|_\infty\leq \FnormSize{}{\tZ_1-\tZ_1}\quad\text{ for all } \tZ_1,\tZ_2\in\tT.
\end{align}
It follows from~\citet[Theorem 9.22]{kosorok2007introduction} that the $L_2$-metric, $(2\epsilon)$-bracketing number of $\tF_{\tT}$ is bounded by 
\[
\tH_{[\ ]}(2\varepsilon, \tF_{\tT}, \vnormSize{}{\cdot})\leq \tH(\varepsilon, \tT, \FnormSize{}{\cdot}) \leq Cd_{\max}rK\log {K\over \varepsilon}.
\]
The last inequality is from the covering number bounds for rank-$r$ bounded tensors; see \citet[Lemma 3]{mu2014square}.

Inserting the bracketing number into~\eqref{eq:L} gives
\begin{equation}\label{eq:complexity}
g(L)={1\over L}\int^{\sqrt{L^{\alpha/(\alpha+1)}+{\rho^{-1}L}}}_{L}  \sqrt{d_{\max}rK\log\left({K\over \varepsilon}\right)}d\varepsilon.
\end{equation}
By the monotonicity of the integrand in~\eqref{eq:complexity}, we bound $g(L)$ by 
\begin{align}\label{eq:g}
g(L)&\leq {\sqrt{d_{\max}rK}\over L}\int_{L}^{\sqrt{L^{\alpha/(\alpha+1)}+\rho^{-1}L}}\sqrt{\log \left(K \over L \right)}d\varepsilon\notag \\
&\leq \sqrt{d_{\max}rK(\log K - \log L)}\left({L^{\alpha/(2\alpha+2)}+\sqrt{\rho^{-1}L} \over L }-1\right)\notag \\
&\leq  \sqrt{d_{\max}rK\log K}\left( {1\over L^{(\alpha+2)/(2\alpha+2)}}+{1\over \sqrt{\rho L}}\right),
\end{align}
where the second line follows from $\sqrt{a+b} \leq \sqrt{a}+\sqrt{b}$ for $a,b>0$.
It remains to verify that $g(L_n) \leq Cn^{1/2}$ for $L_n$ specified in~\eqref{eq:L}. Plugging $L_n$ into the last line of~\eqref{eq:g} gives
\begin{align}
g(L_n)&\leq \sqrt{d_{\max}rK\log K}\left( {1\over L_n^{(\alpha+2)/(2\alpha+2)}}+{1\over \sqrt{\rho L_n}}\right)
\\&\leq \sqrt{d_{\max}rK\log K}\left(\left[\left(d_{\max}rK\log K\over n\right)^{\alpha+1\over \alpha+2}\right]^{-{\alpha+2\over2\alpha+2}}+\left[\rho \left(d_{\max}rK\log K\over \rho n\right)\right]^{-{1\over2}} \right)
\\&\leq Cn^{1/2},
\end{align}
where $C>0$ is a constant independent of $r,K$  and $d_{\text{max}}$. The proof is therefore complete.  
\end{proof}

\subsubsection{Proof of Theorem~\ref{thm:estimation}}
\begin{proof}[Proof of Theorem~\ref{thm:estimation}]
By definition of $\hat\Theta$, we have
\begin{align}\label{eq:pfmain3}\nonumber
\text{MAE}(\hat\Theta,\Theta) &= \mathbb{E}\left|\frac{1}{2H+1}\sum_{\pi\in\Pi}\sign\hat Z_\pi-\Theta\right|\\\nonumber
&\leq \mathbb{E}\left|\frac{1}{2H+1}\sum_{\pi\in\Pi}\left(\sign\hat Z_\pi-\sign(\Theta-\pi)\right)\right|+\mathbb{E}\left|\frac{1}{2H+1}\sum_{\pi\in\Pi}\sign(\Theta-\pi)-\Theta\right|\\&
\leq \frac{1}{2H+1}\sum_{\pi\in\Pi}\text{MAE}(\sign\hat Z_\pi,\sign(\Theta-\pi))+\frac{1}{H},
\end{align}
where the last line comes  from the triangle inequality and the inequality
\begin{equation}
\left|\frac{1}{2H+1}\sum_{\pi\in\Pi}\sign(\Theta(\omega)-\pi)-\Theta(\omega)\right|\leq \frac{1}{H},\quad\text{for all } \omega\in[d_1]\times\cdots\times[d_K] .
\end{equation}
Write $n=|\Omega|$. Now it suffices to bound  the first term in \eqref{eq:pfmain3}.  We prove that 
\begin{equation}\label{eq:total}
{1\over 2H+1}\sum_{\pi \in \Pi} \textup{MAE}(\sign \hat Z_\pi, \sign (\Theta-\pi)) \lesssim  t_n^{\alpha/(\alpha+2)}+{1\over H}+ H t_n, \quad \text{with } t_n={d_{\max}rK\log K\over n}.
\end{equation}
Theorem~\ref{thm:classification} implies that the  sign estimation accuracy depends on the closeness of $\pi\in \tH$ to the mass points in $\tH$. Therefore, we partition the level set $\pi \in \tH$ based on their closeness to $\tH$. Specifically, let $\tN_H \stackrel{\text{def}}{=}\bigcup_{\pi'\in\tN}\left(\pi'-\frac{1}{H},\pi'+\frac{1}{H}\right)$ denote the set of levels at least $1\over H$-close to the mass points. We expand~\eqref{eq:total} by
\begin{align}\label{eq:twobounds}
&{1\over 2H+1}\sum_{\pi \in \Pi} \textup{MAE}(\sign \hat Z_\pi, \sign (\Theta-\pi))\notag \\
=&{1\over 2H+1}\sum_{\pi \in \Pi\cap \tN_H} \textup{MAE}(\sign \hat Z_\pi, \sign (\Theta-\pi))+{1\over 2H+1}\sum_{\pi \in \Pi\cap \tN_H^c} \textup{MAE}(\sign \hat Z_\pi, \sign (\Theta-\pi)).
\end{align}
By assumption, the first term involves only finite number of summands and thus can be bounded by $4C/ (2H+1)$ where $C>0$ is a constant such that $|\tN|\leq C$.  We bound the second term using the explicit forms of $\rho(\pi, \tN)$ in the sequence $\pi \in\Pi\cap \tN_H^c$. Based on Theorem~\ref{thm:classification}, 
\begin{align}
{1\over 2H+1}\sum_{\pi \in \Pi\cap \tN_H^c} \textup{MAE}(\sign \hat \tZ_\pi, \sign (\Theta-\pi)) &\lesssim  {1\over 2H+1}\sum_{\pi\in \Pi\cap \tN_H^c} t_n^{\alpha/(\alpha+2)}+{t_n\over 2H+1}\sum_{\pi \in \Pi\cap \tN_H^c}{1\over \rho^2(\pi, \tN)}\\
&\leq t_n^{\alpha/(\alpha+2)}+{t_n\over 2H+1} \sum_{\pi \in \Pi\cap \tN_H^c} \sum_{\pi' \in \tN}{1\over |\pi-\pi'|^2}\\
&\leq  t_n^{\alpha/(\alpha+2)}+{t_n\over 2H+1} \sum_{\pi'\in \tN} \sum_{\pi \in \Pi\cap \tN_H^c}{1\over |\pi-\pi'|^2}\\
&\leq t_n^{\alpha/(\alpha+2)}+ 2CHt_n,
\end{align}
where the last inequality follows from the Lemma~\ref{lem:H}.  Combining the bounds for the two terms in \eqref{eq:twobounds} completes the proof for conclusion~\eqref{eq:total}. Finally, plugging \eqref{eq:total} into \eqref{eq:pfmain3} yields
\begin{align}
\text{MAE}(\hat\Theta,\Theta)\lesssim \left(d_{\max}rK\log K\over |\Omega|\right)^{\alpha/(\alpha+2)}+\frac{1}{H}+H{d_{\max}rK\log K\over |\Omega|}.
 \end{align}
The conclusion follows by absorbing $K\log K$ into the constant term in the statement. 
\end{proof}

\begin{lem}\label{lem:H}
Fix $\pi'\in\tN$ and a sequence $\Pi=\{-1,\ldots,-1/H,0,1/H,\ldots,1\}$ with $H\geq 2$. Then, 
\[
\sum_{\pi \in \Pi\cap \tN_H^c}{1\over 
|\pi-\pi'|^2}\leq 4H^2. 
\]
\begin{proof}[Proof of Lemma~\ref{lem:H}]
Notice that all points $\pi\in\Pi\cap\tN_H^c$ satisfy $|\pi-\pi'|>{1\over H}$ for all $\pi'\in\tN$. We use this fact to compute the sum
\begin{align}
   \sum_{\pi \in \Pi\cap \tN_H^c}{1\over |\pi-\pi'|^2}&= \sum_{\frac{h}{H}\in\Pi\cap \tN_H^c } {1\over |\frac{h}{H}-\pi'|^2}\\
   &\leq 2H^2\sum_{h=1}^{H}{1 \over h^2}\\
 &\leq 2H^2\left\{ 1+\int_{1}^2{1\over x^2}dx+ \int_{2}^3{1\over x^2}dx+\cdots + \int_{H-1}^H{1\over x^2}dx\right\}\\
&= 2H^2\left(1+\int^{H}_{1}{1\over x^2}dx\right) \leq 4H^2,
\end{align}
 where the third line uses the monotonicity of ${1\over x^2}$ for $x\geq 1$. 
 \end{proof}
\end{lem}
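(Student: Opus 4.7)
The plan is to reduce the claim to a finite-tail comparison with $\sum 1/k^2$ via a rescaling. First I would parametrize the grid as $\Pi=\{h/H:h\in\{-H,\ldots,H\}\}$, so that any $\pi\in\Pi\cap\tN_H^c$ has the form $\pi=h/H$ and, by definition of $\tN_H$, satisfies $|h/H-\pi'|\geq 1/H$ for the given $\pi'\in\tN$. Under the substitution the target sum becomes
\[
\sum_{\pi\in\Pi\cap\tN_H^c}\frac{1}{|\pi-\pi'|^2}
\;=\;H^{2}\sum_{\substack{h\in\{-H,\ldots,H\}\\ |h-H\pi'|\geq 1}}\frac{1}{|h-H\pi'|^{2}},
\]
so it suffices to bound the integer sum on the right by $4$, uniformly in the position of $H\pi'\in\mathbb{R}$.

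Next I would estimate that integer sum by grouping indices $h$ according to the level sets $\{h:|h-H\pi'|\in[k,k+1)\}$ for $k=1,2,\ldots$. The key geometric observation is that each such level set contains at most two integers---one on either side of $H\pi'$---regardless of where $H\pi'$ sits on the real line. Hence every summand is bounded by $1/k^2$ for some $k\geq 1$ and each $k$ is visited at most twice, which yields
\[
\sum_{|h-H\pi'|\geq 1}\frac{1}{|h-H\pi'|^2}\;\leq\;2\sum_{k=1}^{\infty}\frac{1}{k^2}\;\leq\;2\Bigl(1+\textstyle\int_{1}^{\infty}x^{-2}\,dx\Bigr)=4.
\]
Multiplying by the prefactor $H^{2}$ delivers the claimed $4H^{2}$.

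The only real issue to watch is the bookkeeping around the position of $H\pi'$: if $H\pi'$ is itself an integer then the ``center'' index $h=H\pi'$ is simply excluded by the condition $|h-H\pi'|\geq 1$, while otherwise at most two consecutive integers straddle $H\pi'$ and both are excluded. In either case the two-per-shell bound is uniform, so no further case analysis is needed, and the edge effect from truncating $h$ to $\{-H,\ldots,H\}$ only shrinks the sum and can be dropped freely. Overall the lemma is a clean rescaling followed by a geometric counting argument; the chief care is in picking the crude bound $\sum_{k\geq 1}k^{-2}\leq 2$ so that the final constant matches the stated $4H^{2}$ exactly.
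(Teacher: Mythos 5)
Your proof is correct and follows essentially the same route as the paper's: rescale the grid by $H$, use the separation $|\pi-\pi'|\geq 1/H$ together with the observation that each distance level contributes at most two grid points (one per side of $\pi'$), and bound $\sum_{k\geq 1}k^{-2}\leq 1+\int_1^\infty x^{-2}\,dx=2$ to get the constant $4H^2$. Your explicit unit-shell counting is a slightly more careful justification of the step the paper states tersely as $\leq 2H^2\sum_{h=1}^H h^{-2}$, but the argument is the same in substance.
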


\section{Conclusion}
We have developed a tensor completion method that addresses both low- and high-rankness based on sign series representation. Our work provide a nonparametric framework for tensor estimation, and we obtain results  previously impossible.  We hope the work opens up new inquiry that allows more researchers to contribute to this field.

\section*{Acknowledgements}
This research is supported in part by NSF grant DMS-1915978 and Wisconsin Alumni Research Foundation.

\bibliography{signT_arxiv_v1}
\bibliographystyle{apalike}

\end{document}